\newtheorem{defn}{Definition}
\newtheorem{thm}{Theorem}
\newtheorem{lem}[thm]{Lemma}
\begin{document}
\title{Policy Gradients for Contextual Recommendations}

\author{Feiyang Pan$^{1,3}$, Qingpeng Cai$^{2}$, Pingzhong Tang$^{2}$, Fuzhen Zhuang$^{1,3}$, Qing He$^{1,3}$}
\affiliation{%
	\institution{\textsuperscript{\rm 1}Key Lab of Intelligent Information Processing of Chinese Academy of Sciences (CAS), \\Institute of Computing Technology, CAS, Beijing, China.\\
	\textsuperscript{\rm 2}IIIS, Tsinghua University.\\
		\textsuperscript{\rm 3}University of Chinese Academy of Sciences, China.
	}
}

\renewcommand{\shortauthors}{F. Pan et al.}
\begin{abstract}
Decision making is a challenging task in online recommender systems. The decision maker often needs to choose a contextual item at each step from a set of candidates. Contextual bandit algorithms have been successfully deployed to such applications, for the trade-off between exploration and exploitation and the state-of-art performance on minimizing online costs. However, the applicability of existing contextual bandit methods is limited by the over-simplified assumptions of the problem, such as assuming a simple form of the reward function or assuming a static environment where the states are not affected by previous actions. 

In this work, we put forward {\em Policy Gradients for Contextual Recommendations (PGCR)} to solve the problem without those unrealistic assumptions. It optimizes over a restricted class of policies where the marginal probability of choosing an item (in expectation of other items) has a simple closed form, and the gradient of the expected return over the policy in this class is in a succinct form. Moreover, PGCR leverages two useful heuristic techniques called Time-Dependent Greed and Actor-Dropout. The former ensures PGCR to be empirically greedy in the limit, and the latter addresses the trade-off between exploration and exploitation by using the policy network with Dropout as a Bayesian approximation. 

PGCR can solve the standard contextual bandits as well as its Markov Decision Process generalization.
Therefore it can be applied to a wide range of realistic settings of recommendations, such as personalized advertising. We evaluate PGCR on toy datasets as well as a real-world dataset of personalized music recommendations. Experiments show that PGCR enables fast convergence and low regret, and outperforms both classic contextual-bandits and vanilla policy gradient methods.
\let\thefootnote\relax\footnotetext{Pingzhong Tang (kenshinping@gmail.com) is the corresponding author.}
\end{abstract}
\begin{CCSXML}
<ccs2012>
<concept>
<concept_id>10002951.10003260.10003272</concept_id>
<concept_desc>Information systems~Online advertising</concept_desc>
<concept_significance>500</concept_significance>
</concept>
<concept>
<concept_id>10010147.10010257</concept_id>
<concept_desc>Computing methodologies~Machine learning</concept_desc>
<concept_significance>500</concept_significance>
</concept>
</ccs2012>
\end{CCSXML}

\ccsdesc[500]{Information systems~Online advertising}
\ccsdesc[500]{Computing methodologies~Machine learning}

\keywords{Recommender System; Policy Gradients; Contextual Bandits; Reinforcement Learning;}

\maketitle

\section{Introduction}
Decision making in online recommender systems and advertising systems are challenging because the recommender needs to find the policy that maximizes its revenue by interacting with the world. A typical decision-making problem is to select a featured item from a finite set of candidates, for example, to select an advertisement from a set of ads that relate to the user's query in a search engine, or to recommend a song from the user's playlist in a music streaming service. After making the decision, the recommender will receive a reward together with some state transition. In such settings, each item has a so-called \textit{context} (which includes features and attributes) that carries all the necessary information for making the choice. Since it is often the case that the reward, as well as the state dynamic of choosing each item, are related to its context, the recommender system must try to learn how to make the choice given the contexts of all the candidates.

To solve such contextual recommendation problems, algorithms based on \textit{contextual-bandits} have been successfully deployed in a number of industrial level applications over the past decade, such as personalized recommender systems~\cite{li2010contextual,tang2014ensemble,tang2015personalized}, advertisement personalization~\cite{bouneffouf2012contextual,tang2013automatic}, and learning-to-rank~\cite{slivkins2013ranked}. Contextual-bandit algorithms are preferred if one needs to minimize the cumulative cost during online-learning because they aim to address the trade-off between exploitation and exploration. 

The standard contextual bandit problem can be seen as a repeated game between nature and the player~\cite{langford2008epoch}. Nature defines a reward function mapping contexts (a set of features) to real-valued rewards, which is unknown to the player. At each step, nature gives a set of items, each with a context. The player observes the contextual items, selects one, and then receives a reward. The payoff of the player is to minimize the cumulative regret or to maximize the cumulative reward. 

The main challenge of solving contextual bandits lies in the trade-off between exploration and exploitation. The most well-known approaches are arguably value-based methods including Upper Confidence Bounds (UCB) \cite{auer2002finite}, Thompson Sampling (TS) \cite{thompson1933likelihood}, and their variants. These value-based methods try to estimate the expected reward of choosing each item, so they are especially effective when the form of reward function is known explicitly. For example, when the expected reward is linear in the context, \cite{li2010contextual,chu2011contextual,abbasi2011improved} proposed Lin-UCB, which is applied successfully to the online news recommendation of Yahoo, and \cite{chapelle2011empirical,may2012optimistic,agrawal2013thompson} also proposed TS to solve the linear contextual bandits. Similarly, \cite{filippi2010parametric} proposed GLM-UCB using generalized linear models, \cite{krause2011contextual,srinivas2012information} used Gaussian Processes, to model the reward functions. These variants of UCB and TS have been known to achieve sub-linear regrets \cite{auer2002finite,abe2003reinforcement,li2010contextual,chu2011contextual,abbasi2011improved,chapelle2011empirical,agrawal2013thompson,filippi2010parametric,krause2011contextual,srinivas2012information}.
Similar ideas have also been applied to reinforcement learning algorithm such as the UCRL algorithm with regret bounds~\cite{jaksch2010near}.

 However, the applicability of these approaches in real-world applications is heavily limited, especially for large-scale and high-dimensional problems, due to the following reasons: 
\begin{itemize}
    \item First, these methods tend to over-simplify the form of the reward function, which is unrealistic in real-world cases. For example, for sponsored search advertising via real-time bidding, the reward of showing an ad (cost per click) is often the click-through rate multiplied by the bidding price, so it can be understood as a mixture of binary and linear outcomes. Moreover, the reward can often be a high-order non-linear function of features in the contexts.
     \item Second, the overall formulations of contextual-bandit problems are sometimes over-simplified comparing to real-world applications. It is often assumed that the reward is determined by the context of the currently chosen item, and the distribution of contexts is independent of the agent's action. However, it may not be true in real-world recommender systems where the behaviors of users heavily depend on not only the current contexts but the history, i.e., the items that he/she viewed in previous rounds. Also, the set of candidate items can relate to the user's previous preferences as well. These dependencies are not well exploited in existing models.
    \item Last but not least, these methods are value-based, so they are meant to find deterministic policies. A subtle change in the value estimation may cause a discontinuous jump in the resulting policy, which makes convergence difficult for these \cite{sutton2000policy}. On the other hand, stochastic policies are sometimes preferred in online recommender systems.
\end{itemize}

In light of these observations, we propose Policy Gradients for Contextual Recommendations (PGCR), which uses the policy gradient method to solve general contextual recommendations. Our approach model the contextual recommendation problem without unrealistic assumptions or prior knowledge. By optimizing directly over the parameters of stochastic policies, it naturally fits the problems that require randomized actions as well as addresses the trade-off between exploration and exploitation.

Since we design PGCR specifically for contextual recommendations, we would like to specify the performance objective first and see if it is different from the one in standard reinforcement learning. We find that the objective over policies depends on the marginal expected probability of choosing each item (in expectation of other items). So PGCR restricts the search space to a class of policies in which the expected probabilities of choosing an item has a simple closed form and can be estimated efficiently. Therefore, the search space for PGCR is dramatically reduced. 

Then, in order to estimate the marginal probability of choosing each item, we extend Experience Replay, the popular technique in off-policy reinforcement learning \cite{adam2012experience,heess2015memory}, to a finer-grained sampling procedure. By doing so, the variance of estimating policy gradients can be much smaller than the variance of the vanilla policy gradient algorithm. The resulted algorithm is also computationally efficient by stochastic gradient descent with mini-batch training. 

To address the trade-off of exploration and exploitation, our proposed PGCR empirically has the property of Greedy in the Limit with Infinite Exploration (GLIE), which is an essential property for contextual bandits \cite{may2012optimistic}. The property is guaranteed by two useful heuristics named \textit{Time-Dependent Greed}
and \textit{Actor-Dropout}. Time-Dependent Greed is to schedule the level of greed to increase over time, so the resulted stochastic policy will explore a lot in the early stage and then gradually converge to a greedy policy. Actor-Dropout is to use dropout on the policy network while training and inferring, thus the feed-forward network outputs policies with randomness. It has been known that such a stochastic feed-forward neural network can be seen as a Bayesian approximation \cite{gal2016dropout}, so it can provide with directed exploration for PGCR.

Furthermore, with the mentioned techniques, PGCR can directly apply to contextual recommendations in a Markov Decision Process (MDP) setting, i.e. with states and state transitions. We propose this generalized setting for the reason that the i.i.d. assumption on contexts in the standard contextual bandit setting is unrealistic for real-world applications. On the other hand, we suppose that at each step, the contexts are drawn i.i.d. from a distribution conditional on the current state. Furthermore, when an item is chosen, the immediate reward is determined by both the state and the selected item. The state is then transitioned into the next state. Such a model is tailored for a wide range of important realistic applications such as personalized recommender systems where users' preferences are regarded as states and items are regarded as items with contexts \cite{shani2005mdp,taghipour2008hybrid}, and e-commerce where the private information (e.g., cost, reputation) of sellers can be viewed as states and different commercial strategies are regarded as contexts \cite{cai2018reinforcement}.

We evaluate PGCR on toy datasets and a real-world dataset of music recommendation. By comparing with several common baselines including Lin-UCB, GLM-UCB, Thompson Sampling, $\epsilon$-greedy, and vanilla policy gradients, it shows that PGCR converges quickly and achieves the lowest cumulative regret and the highest average reward in various standard contextual-bandits settings. Moreover, when state dynamics are included in the real-world recommendation environments, we find that GLM-UCB and TS fail to incorporate information from the states, while PGCR consistently outperforms other baselines.
\section{Problem formulation}
\subsection{One-step Contextual Recommendations}
We first introduce the simplified setting of contextual recommendation as a standard contextual-bandits problem.
At each step, we have a set of contexts $\mathbf{c} = (c_1,\dots,c_m)^T$ that corresponds to $m$ items, where $c_i$ is the context of the $i^{th}$ item. The contexts $c_1, \dots, c_m$ are independently and identically distributed random variables with outcome space $\mathcal{C}$. 
The action is to select an item from the candidates,
$a\in\{1,...,m\}.$

For the ease of notation, we use $\mathbf{c}$ to denote the concatenation of all $m$ contexts and use $c_a$ to denote the context of the selected item $a$. We write the random variable of immediate reward as $R(c_a)$ to note that in this setting it depends only on the chosen context vector $c_a$. The dependency is not known to the decision-maker. So the target is to learn the dependency and choose the item with the largest expected reward.

A stochastic policy $\pi$ is a function that maps the observations (the set of contexts $\mathbf{c}$) to a distribution of actions. Let the random variable $a\sim\pi(\mathbf{c})$ denote the action determined by policy $\pi$. The performance of a policy is measured as the expected reward of the chosen item over all possible contexts, i.e.,
\begin{equation}
J(\pi) = \mathbb{E}_{\mathbf{c}}\big[R(c_a)\mid a\sim\pi(\mathbf{c})\big],
\label{obj-1}\end{equation}
where $c_a$ is short for the context of the chosen action $a$.

When the policy $\pi$ is parameterized as $\pi_\theta$ where $\theta$ is the trainable parameters, our goal is to find the optimal choice of $\theta$ that maximizes the objective $J(\pi_\theta)$.

However, there is an obvious drawback for this simplified setting: in real-world recommendations, it cannot be assumed that the contexts are always drawn i.i.d from some global probability distribution. For example, when recommending items (goods) to a customer given the searching query in an e-commerce platform, we can only select items from a candidate pool related to the query. This non-i.i.d nature motivates us to put forward a more general setting, which involves the states.
\subsection{Sequential State-aware Contextual Recommendations}
In this part, we introduce the generalized setting as a Markov Decision Process (MDP) with states and state transitions for contextual recommendations, which is referred to as MDP-CR. 

At each step $t$, the decision maker observes its state $s_t$ as well as a set of contexts correlated to that state $\mathbf{c_t} = \{c_{t1}, \dots, c_{tm}\}$. When an action (one of the items) ${a}_t=\pi(s_t,\mathbf{c_t})$ is selected, a reward $R(s_t,c_{ta_t})$ is received, and the state is transitioned to the next state by a Markovian state transition probability $s_{t+1}\sim T(s_{t+1} \mid s_t, c_{ta_t})$. 
Note that the setting in this paper is different from other existing generalized bandits with transitions such as Restless bandits \cite{whittle1988restless}.

In this setting, we assume that the contexts are independently distributed conditioning on the state: $c_{ti}\sim g^{s_{t}}(c)$ for all $i$, where $g^{s_{t}}(c)$ is the probability density of contexts given state $s_{t}$. For example, if the state is the search query or the attribute vector of a user, we assume that the contexts of items in the candidate pool are drawn from a distribution that reflects the search query or the user preference. 

The goal is to find a policy that maximizes the expected cumulative discounted reward, so the objective is 
\begin{equation}
J(\pi) = \mathbb{E}\bigg[\sum_{t=0}^{\infty} \gamma^t R(s_t,c_{ta_t})\mid {a_t}\sim\pi({s_t,\mathbf{c_t}})\bigg],
\end{equation}
where $0<\gamma<1$ is a discount factor that balances short and long term rewards, just like in standard reinforcement learning. We also define the action value function
\begin{equation}Q^{\pi}(s,c)=\mathbb{E}\big[\sum_{t=0}^{\infty} \gamma^{t} R(s_t,c_{ta_t})\mid s_0=s, c_{0a_0}=c, \pi\big].
\label{action-value}\end{equation}

Same as previous works \cite{sutton2000policy,silver2014deterministic} on policy gradients, we denote the discounted state density by $\rho^{\pi}(s)=\int_{\mathcal{S}}\sum_{t=0}^{\infty} \gamma^{t}P_{0}(s_{0})P(s_{0}\rightarrow s, t, \pi)\mathrm{d}s_{0},$
where $P_0(s_0)$ is the probability density of initial states, and $P(s\rightarrow s', t, \pi)$ is the probability density at state $s'$ after transitioning for $t$ time steps from state $s$. 

Thus we can rewrite the objective as
\begin{equation}
J(\pi) = \mathbb{E}_{s\sim\rho^{\pi},\mathbf{c}\sim g^{s},a\sim\pi(s,\mathbf{c})}\big[R(s,c_{a})\big].
\label{J_s_rho}\end{equation}

\section{Policy Gradients for One-Step Contextual Recommendations}
In this section we investigate several key features of our purposed PGCR method. 
For readability, we first discuss the one-step contextual recommendation case (corresponding to section 2.1), which can be modeled as the standard contextual-bandits. Later in the next section, we will show how to extend to the generalized multi-step recommendation which can be modeled as MDP-CR.
\subsection{Marginal probability for choosing an item}

Due to the assumption of the problem setting that the reward only depends on the selected context, we claim that for any policy $\pi$, there exists a \textit{permutation invariant policy} that obtains at least its performance.

\begin{defn}[Permutation invariant policy]
A policy $\pi(\mathbf{c})$ is said to be permutation invariant if for all $\mathbf{c} \in \mathcal{C}^m$ and any its permutation $\mathbf{c}' := P(\mathbf{c})$, it has
\begin{equation}
c'_{a'}  \overset{\text{dist}}{=}  c_{a}
\end{equation}
where $a'\sim\pi(\mathbf{c}')$ and $a\sim \pi(\mathbf{c})$ denote for the actions chosen by the policy $\pi$, $c'_{a'}$ and $c_a$ is the their corresponding contexts respectively, and $\overset{\text{dist}}{=}$ means the probability distribution of the expressions on two sides are the same.
\end{defn}

\begin{lem}
\label{permutaion_in}
For any policy $\pi$, there exists a permutation invariant policy $\pi'$ s.t. $J(\pi')\geq J(\pi)$.
\end{lem}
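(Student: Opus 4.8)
The plan is to prove the lemma by an explicit \emph{symmetrization} of $\pi$ over the symmetric group $S_m$, and then to verify two facts: that the symmetrized policy is permutation invariant, and that symmetrization leaves the objective $J$ unchanged. Note that the second fact gives equality $J(\pi')=J(\pi)$, which is stronger than the claimed inequality and exactly explains why restricting to permutation invariant policies loses nothing.

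First I would construct the symmetrized policy $\pi'$ as follows. To sample an action from $\pi'(\mathbf{c})$, draw a permutation $\sigma$ uniformly at random from $S_m$, form the permuted context vector $\mathbf{c}^\sigma$ whose $i$-th entry is $c_{\sigma(i)}$, run the original policy to obtain an index $j\sim\pi(\mathbf{c}^\sigma)$, and return the action $a'=\sigma(j)$, i.e. the item that occupied position $j$ before permuting. Mapping the selected index back to the original labeling keeps $a'\in\{1,\dots,m\}$, and averaging over all $\sigma$ makes the construction treat every labeling symmetrically.

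Next I would establish permutation invariance. Fix any permutation $\tau$ and set $\mathbf{c}'=\tau(\mathbf{c})$ with $c'_k=c_{\tau(k)}$. Running $\pi'$ on $\mathbf{c}'$ draws a uniform $\sigma'$, and one checks componentwise that $(\mathbf{c}')^{\sigma'}=\mathbf{c}^{\tau\circ\sigma'}$, so the chosen context is $c_{\tau(\sigma'(j'))}$ with $j'\sim\pi(\mathbf{c}^{\tau\circ\sigma'})$. Substituting $\rho=\tau\circ\sigma'$ and using that $\rho$ is again uniform on $S_m$ whenever $\sigma'$ is, the distribution of the chosen context under $\pi'(\mathbf{c}')$ coincides exactly with the one under $\pi'(\mathbf{c})$, which is precisely the defining condition of a permutation invariant policy.

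Finally I would show $J(\pi')=J(\pi)$. Writing out the objective for the symmetrized policy gives $J(\pi')=\mathbb{E}_{\sigma}\mathbb{E}_{\mathbf{c}}[R(c_{\sigma(j)})\mid j\sim\pi(\mathbf{c}^\sigma)]$. For each fixed $\sigma$ the crucial observation is that, because $c_1,\dots,c_m$ are i.i.d., the permuted vector $\mathbf{c}^\sigma$ has the same distribution as $\mathbf{c}$; relabeling $\mathbf{d}=\mathbf{c}^\sigma$ turns the inner expectation into $\mathbb{E}_{\mathbf{d}}[R(d_j)\mid j\sim\pi(\mathbf{d})]=J(\pi)$. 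Averaging the constant $J(\pi)$ over $\sigma$ yields $J(\pi')=J(\pi)\geq J(\pi)$, completing the proof. I expect the main obstacle to be purely bookkeeping: keeping the convention for how $\sigma$ acts on positions consistent with how the chosen index is translated back, so that both the invariance check and the change of variables in the objective align; the i.i.d. assumption is what legitimizes the distributional step and should be invoked explicitly.
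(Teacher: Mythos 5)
Your proposal is correct, and it takes a genuinely different route from the paper. The paper argues by contradiction: it assumes a non-invariant policy strictly beats every permutation invariant one, uses the i.i.d.\ assumption to rewrite $J(\pi)$ as an average over all permutations of the input, extracts a single context vector $\mathbf{c}$ where the averaged reward is large, and then builds a competing policy by selecting the reward-maximizing permutation $P^*$ at that $\mathbf{c}$, deriving a contradiction. You instead give a direct, constructive symmetrization: average $\pi$ over a uniformly random relabeling of the inputs, check invariance via the group structure of $S_m$ (composing a fixed $\tau$ with a uniform $\sigma'$ is again uniform), and use exchangeability of the i.i.d.\ contexts to conclude $J(\pi')=J(\pi)$. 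Both arguments hinge on the same key fact --- that $\mathbf{c}^{\sigma}$ has the same distribution as $\mathbf{c}$ --- which the paper invokes in its display equation~(\ref{confliction_1}). What your approach buys is a cleaner and more robust proof: it avoids the paper's somewhat delicate steps of passing from an inequality of expectations to a pointwise inequality and of verifying that the argmax-over-permutations policy $\pi^*$ is itself permutation invariant, and it yields the stronger conclusion $J(\pi')=J(\pi)$, which makes transparent why restricting attention to permutation invariant policies is without loss of generality. The one thing to keep explicit, as you note, is the bookkeeping convention for how $\sigma$ acts on positions versus labels so that the invariance check and the change of variables in the objective use the same convention.
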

\begin{proof}
Let us suppose, for the sake of contradiction, that there exists a policy $\pi$ such that

(i) it is not permutation invariant, i.e. there exists $\mathbf{c}\in \mathcal{C}^m$ and some permutation operator $P\in\mathcal{P}$ that $c_{a} \neq c'_{a'}$ where $\mathbf{c}'=P(\mathbf{c})$ and $a'\sim\pi(\mathbf{c'})$;

(ii) The expected reward following $\pi$ is larger than all permutation invariant policies $\tilde\pi$ that $J(\pi) > J(\tilde\pi)$.

Then it follows that
$\mathbb{E}_\mathbf{c}\big[R(\mathbf{c}_{\pi(\mathbf{c})})\big] > \mathbb{E}_\mathbf{c}\big[R(\mathbf{c}_{\tilde\pi(\mathbf{c})})\big]$ for all permutation invariant $\tilde\pi$,
where the expectation is over all sets of contexts. Recall that the contexts are drawn i.i.d. from the same distribution, so we have
\begin{equation}
\mathbb{E}_\mathbf{c}\big[R(\mathbf{c}_{\pi(c)})\big] \equiv \mathbb{E}_\mathbf{c}\bigg[\frac{1}{|\mathcal{P}|}\sum_{P\in \mathcal{P}} R(P(\mathbf{c})_{\pi(P(\mathbf{c}))})\bigg] >
\mathbb{E}_\mathbf{c}\big[R(\mathbf{c}_{\tilde\pi(c)})\big],
\label{confliction_1}\end{equation}
so there exists at least one $\mathbf{c}$ that
\begin{equation}
\frac{1}{|\mathcal{P}|}\sum_{P\in \mathcal{P}} R(P(\mathbf{c})_{\pi(P(\mathbf{c}))}) > R(\mathbf{c}_{\tilde\pi(c)}) \mbox{ for all }\tilde\pi.
\label{confliction_2}\end{equation}
But because $\pi$ is not permutation invariant, we find a policy $\pi^*(P(\mathbf{c}))=\pi((P^*P^TP)(\mathbf{c}))$ that is permutation invariant, where $$P^* = \arg\max_{P\in\mathcal{P}} R(P(\mathbf{c})_{\pi(P(\mathbf{c}))}),$$ then
\begin{equation}
R(\mathbf{c}_{\pi^*(c)})=R(P^*(\mathbf{c})_{\pi(P^*(\mathbf{c}))})>\frac{1}{|\mathcal{P}|}\sum_{P\in \mathcal{P}} R(P(\mathbf{c})_{\pi(P(\mathbf{c}))}),
\end{equation}
which leads to a contradictory to (\ref{confliction_1}) and (\ref{confliction_2}). So it must be that Lemma \ref{permutaion_in} holds.
\end{proof}

Lemma \ref{permutaion_in} states that, without loss of generality, we can focus on permutation invariant policies. The objective in then becomes
\begin{align}
J(\pi) &= \mathbb{E}_{\mathbf{c}}\big[R(c_a)\mid a\sim\pi(\mathbf{c})\big]\notag\\
&= \mathbb{E}_{\mathbf{c}}\bigg[\sum_{i=1}^mR(c_i){I}_{(a=i)} \,\bigg|\, {a}\sim\pi(\mathbf{c})\bigg]\notag\\
&= \sum_{i=1}^m\mathbb{E}_{\mathbf{c}}\bigg[R(c_i){I}_{(a=i)} \,\bigg|\, {a}\sim\pi(\mathbf{c})\bigg]\notag\\
&= \sum_{i=1}^m\,\mathbb{E}_{c_1}\bigg[R(c_1) \mathbb{E}_{c_{-1}}\big[  {I}_{(a=1)} \mid {a}\sim\pi(c,c_{-1})\big]\bigg] \notag\\
&= m\,\mathbb{E}_c\big[R(c)p(c)\big],\label{independent}
\end{align}

where $p(c)$ is the marginal probability of choosing an item with context $c$ (in expectation of randomness of the other $m-1$ items, denoted as $c_{-1}$), by a permutation invariant policy:
\begin{equation}
p(c) = \mathbb{E}_{{c}_{-1}}\big[  {I}_{(a=1)} \mid {a}\sim\pi(c,c_{-1})\big].
\label{marginal_pr}
\end{equation}
Suppose we have a score function $\mu_\theta$ which takes the context as inputs and outputs a score, where $\theta$ are the parameters. We can construct a class $\mathcal{M}$ of permutation invariant policies with the score function:
\begin{equation}
\pi_\theta(\mathbf{c}) \overset{\text{dist}}{=} g\big(\mu_\theta(c_1), \dots, \mu_\theta(c_m)\big),
\label{g-form}
\end{equation}
where $g$ is an operator that satisfies permutation invariance, for example, a family of probability distributions, and $\overset{\text{dist}}{=}$ means the two sides are equivalent in the sense of probability distribution. 

Note that this class of policies include policies of most well-known value-based bandit algorithms. For example, if the score function is the estimation of the reward, and the operator $g(\cdot)$ chooses the item with the maximum estimated reward with probability $1-\epsilon$ and chooses randomly with probability $\epsilon$, the policy is exactly the well-known $\epsilon$-greedy policy \cite{sutton1998reinforcement}. If the score function is a summation of the reward estimation and the upper confidence bound, and $g(\cdot)$ chooses the item with the maximum score, it results in a similar policy to the upper confidence bound (UCB) policy \cite{auer2002finite,li2010contextual}. 

The policy gradient $\nabla_\theta J(\pi_\theta)$ for the standard one-step recommendations can be directly derived from (\ref{independent})
\begin{equation}
\begin{split}
\nabla_\theta J(\pi_\theta) = m\, \mathbb{E}_c \big[R(c)\nabla_\theta p_\theta(c)\big].
\end{split}\label{grad_1}\end{equation}
So it involves computing the marginal probabilities of choosing an item $p(c)$ which is not explicitly parameterized or tractable given an arbitrary policy $\pi_\theta$. To this end, we put forward a restricted family of stochastic policies that allows us to have $p(c)$ in closed-form so as to estimate the gradient of $J(\pi_\theta)$ efficiently. 
\subsection{A simple but powerful class of policies}
Now we propose a class of policies for our PGCR algorithm. For the standard one-step contextual-bandits, following the form of a policy described in (\ref{g-form}), we define a class of stochastic policies denoted by $\mathcal{N}$ as
\begin{equation}
\pi_\theta(\mathbf{c}) = \mbox{Multinoulli}\big\{\sigma\big(\mu_\theta(c_1), \dots, \mu_\theta(c_m)\big)\big\},
\label{mypolicy}\end{equation}
where $\mu_\theta(c)$ is any positive continuous function of $c$ and $\theta$, $\sigma$ is a normalization
$\sigma(\mathbf{x}):=\big(\frac{x_1}{\sum_i x_i}, \dots, \frac{x_m}{\sum_i x_i}\big)$ and Multinoulli$(\cdot)$ returns a multinoulli random variable (the multinoulli distribution is also known as categorical distribution). 

The form of our policy (\ref{mypolicy}) generalizes several important policies in reinforcement learning. For example, when $\mu_\theta(c_i)$ is an exponential function $e^{-\beta f_\theta(c_i)}$, it reduces to the well-known \textit{softmax policy}. If $\beta$ approaches to infinity, it converges to the greedy policy that chooses the item with highest score.

For any policy $\pi_{\theta}\in \mathcal{N}$, the marginal probability of choosing an item can be easily derived as 
\begin{equation}
p_\theta(c_i)= \mathbb{E}_{{c}_{-i}}\bigg[\frac{\mu_\theta(c_i)}{\mu_\theta(c_i)+\sum_{j\neq i}\mu_\theta(c_j)}\bigg], 
\end{equation}which is a continuous and positive function of parameters $\theta$. So it is straightforward to estimate $p_\theta(c)$ by sampling ${c}_{-i}$ from the contexts that the player have seen before. We denote by $D_t$ the contexts that have appeared up to step $t$. The estimation of $p(c)$ is unbiased because as assumed all contexts in $D_t$ are i.i.d from the context space.

\section{Policy Gradients for Sequential Contextual Recommendations}
In this section we extend this approach to MDP-CR. We first use $\tilde{c}$ to denote the \it augmented context \rm by pairing together a state $s$ and the context $c$ for a certain item, i.e., $\tilde{c} =(s,c)$. For example, in a personalized recommender system, the state $s$ consists of the user feature, $c$ contains the item feature. Their concatenation $\tilde{c} =(s,c)$ is often used in ordinary tasks such as the prediction of click-through rate. 

Given a policy $\pi$, the states can be roughly thought of as drawn from the discounted stationary distribution $\rho^{\pi}(s)$. As we already defined the density of contexts given the state as $g^s(c)$, we have the discounted density of the augmented context $\tilde{c}$ by the axiom of probability 
$$\xi^{\pi}(\tilde{c}) = \rho^{\pi}(s)\,g^{s}(c).$$ Since we assume the state distribution $\rho^{\pi}(s)$ is stationary, it is natural that $\xi^{\pi}(\tilde c)$ is also stationary.

Then by applying the same technique as we derive the marginal probability, we derive the performance objective as follows:
\begin{equation}
J(\pi) = m\,\mathbb{E}_{\tilde{c}\sim \xi^{\pi}}\big[R(\tilde{c})\cdot p(\tilde{c})\big],
\end{equation}
where $
p(\tilde{c})=\mathbb{E}_{c_{-1}\sim g^{s}}[{I}(a=1)\mid \pi]
$ is the marginal probability of choosing $c$ given $s$. 

Now we derive the gradients of $J(\pi)$ similar to the result in \cite{sutton2000policy}. Surprisingly, it only replaces $R(c)$ in (\ref{grad_1}) by $Q(\tilde{c})$. 
\begin{thm} Assuming the policy $\pi$ leads to stationary distributions for states and contexts, the unbiased policy gradient is
\begin{equation}
\nabla_{\theta}J(\pi_{\theta})=m\,\mathbb{E}_{\tilde c \sim \xi}\big[\nabla_\theta p_{\theta}(\tilde{c}) \,Q(\tilde{c})\big],
\label{pg-thm}\end{equation}
where $Q^\pi(\tilde c):=Q^\pi(s, c)$ as defined in (\ref{action-value}), and $\xi^\pi(\tilde{c})$ is the discounted density of $\tilde c$.
\label{thm_of_pg}\end{thm}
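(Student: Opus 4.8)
The plan is to transcribe the classical discounted policy gradient argument of~\cite{sutton2000policy}, carrying the i.i.d.-context marginalization of (\ref{independent}) through each step so that the sum over the $m$ items collapses into the prefactor $m$ and the marginal probability $p_\theta$. First I would introduce the state value function $V^\pi(s)=\mathbb{E}_{\mathbf{c}\sim g^s,\,a\sim\pi(s,\mathbf{c})}[Q^\pi(s,c_a)]$, which by the definition of $Q^\pi$ in (\ref{action-value}) satisfies $J(\pi_\theta)=\mathbb{E}_{s_0\sim P_0}[V^\pi(s_0)]$. Applying verbatim the computation that produced (\ref{independent})---the contexts are i.i.d.\ from $g^s$, so each item contributes identically---gives the decomposition $V^\pi(s)=m\,\mathbb{E}_{c\sim g^s}[p_\theta(s,c)\,Q^\pi(s,c)]$. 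This reduces the theorem to differentiating $V^\pi$ and integrating over the initial states.

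Next I would invoke the Bellman relation $Q^\pi(s,c)=R(s,c)+\gamma\,\mathbb{E}_{s'\sim T(\cdot\mid s,c)}[V^\pi(s')]$, immediate from (\ref{action-value}). Differentiating the decomposition of $V^\pi$ and using that $R(s,c)$ carries no $\theta$-dependence yields
\begin{equation}
\nabla_\theta V^\pi(s)=m\,\mathbb{E}_{c\sim g^s}\big[\nabla_\theta p_\theta(s,c)\,Q^\pi(s,c)\big]+\gamma\,m\,\mathbb{E}_{c\sim g^s}\big[p_\theta(s,c)\,\mathbb{E}_{s'\sim T}[\nabla_\theta V^\pi(s')]\big].\notag
\end{equation}
Writing $\phi(s):=m\,\mathbb{E}_{c\sim g^s}[\nabla_\theta p_\theta(s,c)\,Q^\pi(s,c)]$ for the first term (the quantity I ultimately want), the second term is a recursion in $\nabla_\theta V^\pi$.

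The step specific to this setting is to identify the kernel inside that recursion. I would define $P^\pi(s\to s'):=m\,\mathbb{E}_{c\sim g^s}[p_\theta(s,c)\,T(s'\mid s,c)]$ and show, again by the i.i.d.\ marginalization, that it coincides with the genuine one-step state-transition density under $\pi$ (averaging over the sampled contexts and the selected item); crucially it is a valid stochastic kernel because $m\,\mathbb{E}_{c\sim g^s}[p_\theta(s,c)]=1$, since exactly one of the $m$ items is chosen and their marginal probabilities sum to one. The recursion then reads $\nabla_\theta V^\pi(s)=\phi(s)+\gamma\int_{s'}P^\pi(s\to s')\,\nabla_\theta V^\pi(s')\,\mathrm{d}s'$, the standard form. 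Unrolling gives $\nabla_\theta V^\pi(s)=\sum_{t=0}^\infty\gamma^t\int_{s'}P(s\to s',t,\pi)\,\phi(s')\,\mathrm{d}s'$; taking $\mathbb{E}_{s_0\sim P_0}$ and recognizing the definition of $\rho^\pi$ collapses the time and initial-state integrals into $\nabla_\theta J(\pi_\theta)=\int_s\rho^\pi(s)\,\phi(s)\,\mathrm{d}s$. Substituting $\phi$ and using $\rho^\pi(s)\,g^s(c)=\xi^\pi(\tilde c)$ produces exactly $m\,\mathbb{E}_{\tilde c\sim\xi}[\nabla_\theta p_\theta(\tilde c)\,Q(\tilde c)]$, as claimed.

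The main difficulties are analytic rather than combinatorial. I must justify interchanging $\nabla_\theta$ with the expectations over $c$ and $s$ and with the infinite sum; this follows from dominated convergence once $\mu_\theta$ (hence $p_\theta$) is continuously differentiable and $Q^\pi$ is uniformly bounded, which holds when rewards are bounded and $0<\gamma<1$. The same contraction $0<\gamma<1$ guarantees convergence of the unrolled Neumann-type series and legitimizes exchanging it with the initial-state integral, exactly as in the discounted MDP proof. The one genuinely new ingredient---and the conceptual crux---is verifying that the marginalized kernel $m\,\mathbb{E}[p_\theta\,T]$ is simultaneously the true transition density and a proper stochastic kernel; with that in place the remainder is a faithful transcription of the standard policy gradient theorem.
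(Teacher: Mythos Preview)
Your proposal is correct and follows essentially the same route as the paper: write $V^\pi(s)=m\,\mathbb{E}_{c\sim g^s}[p_\theta(s,c)\,Q^\pi(s,c)]$, differentiate, use the Bellman relation for $Q^\pi$ to obtain a recursion in $\nabla_\theta V^\pi$, unroll, and integrate against $P_0$ to recover $\rho^\pi$ and then $\xi^\pi$. The paper's proof is terser---it simply writes down the recursion with the one-step kernel $P(s\to s',1,\pi)$ without pausing to identify it as $m\,\mathbb{E}_{c\sim g^s}[p_\theta(s,c)\,T(s'\mid s,c)]$ or to check it integrates to one---whereas you make that step explicit and also address the interchange of $\nabla_\theta$ with expectations and the geometric series; these are genuine fillings-in of points the paper leaves implicit rather than a different argument.
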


\begin{proof}
We denote the state-value for state $s$ under policy $\pi$ as
\begin{equation*}
V^{\pi}(s)=m\int_c p_{\theta}(s,c) Q^{\pi}(s, c)g^s(c)\mbox{d}c,
\label{value function}
\end{equation*}
it follows that
\begin{align}
\nabla_\theta V^\pi(s)
&=\nabla_\theta m\int_c p_{\theta}(s,c) Q^{\pi}(s, c)g^s(c)\mbox{d}c\notag\\
&=m\int_c \nabla_\theta p_{\theta}(s,c) Q^{\pi}(s, c)g^s(c)\mbox{d}c\notag\\
 &\quad+\gamma m \int_{s'} P(s\rightarrow s',1,\pi) \nabla_\theta V^\pi(s')\mbox{d}s',\notag
\end{align}
By repeatedly unrolling the equation, we have
\begin{equation*}
\nabla_\theta V^\pi(s) = m\int_{s'}\sum_{t=0}^{\infty}\gamma^t P(s\rightarrow s', t, \pi)\int_c \nabla_\theta p_{\theta}(\tilde{c})Q(\tilde{c})g^{s'}(c)\mbox{d}c\mbox{d}s^{'}.
\end{equation*}
Integrating both side over the start-state and recalling the discounted state density $\rho^\pi(s)$ and discounted augmented context density $\xi^\pi(\tilde c)$, we get the policy gradient as
\begin{align}
\nabla_\theta J(\pi) &= m\int_s P_0(s) \nabla_\theta V^\pi(s) ds\notag\\
&= m\int_{s}\rho^\pi(s)\int_{c} \nabla_\theta p_{\theta}(\tilde{c})Q(\tilde{c})g^{s}(c)\,\mbox{d}c\mbox{d}s\notag\\
&= m\int_{\tilde{c}}\nabla_\theta p_{\theta}(\tilde{c}) Q(\tilde{c})\xi^\pi(\tilde{c})\mbox{d}\tilde{c}.\notag
\end{align}
\end{proof}

Again we can find the restricted class of policies to perform efficient estimation of the gradient. For MDP-CR, it is straightforward to extend (\ref{mypolicy}) to introduce states by replacing $c$ as the augmented context $\tilde{c}$.

\section{Actor-Critic with Function Approximations}
In this section, we deliver the practical algorithm to estimate the policy gradient using function approximations. In the sample-based approximations, we write collected the reward feedback as $r(c)$ for the one-step recommendation case, and $r(\tilde c)$ for the sequential recommendation case, as the realizations of the reward random variable $R(\cdot)$. 

In conventional methods based on contextual-bandits, the most direct way to estimate the reward function is to directly apply supervised learning methods to find an estimator $f_\phi$ with parameter $\phi$ minimizing the mean squared error, i.e.,
\begin{equation}
\min_\phi \frac{1}{\vert D_t^{(1)} \vert}\sum_{c\in D_t^{(1)}}\big(r(c) - f_\phi(c)\big)^2,
\label{DM_min}\end{equation}
where $D_t^{(1)}\subset D_t$ is the set of chosen contexts and $r(c)$ is the received reward for choosing context $c$. It is common for most value-based contextual bandit algorithms, such as $\epsilon$-greedy, Lin-UCB and Thompson Sampling. 

However, we argue that in a policy-based perspective, this kind of off-policy supervised learning brings bias. Since our goal is to maximize the expected reward $J(\pi_\theta)$ rather than minimizing the empirical loss as in supervised learning, the marginal probabilities of choosing an item must be taken into consideration, and the form of $f_\phi(c)$ cannot be chosen arbitrarily. 

Similarly, when states and state transitions are involved in MDP-CR, we also need to find an approriate $f_\phi(\tilde{c})$ to approximate $Q(\tilde{c})$. 
Since we have noted that the standard one-step contextual-bandits can be seen as a special case of the generalized MDP-CR, from now we will use the notations of MDP-CR by default to deliver the main results and algorithms.

We take similar spirit to \cite{sutton2000policy,silver2014deterministic} and define the following \it compatible \rm conditions, to assure that the policy gradient is orthogonal to the error in value approximation. 
\begin{thm}
\label{Compatible}
The policy gradient using function approximation
\begin{equation}
\nabla_{\theta}J(\pi_{\theta}) = m\int_{\tilde{c}} \nabla_{\theta}p_{\theta}(\tilde{c})\cdot f_{\phi}(\tilde{c})\xi^\pi(\tilde{c})\mbox{d}\tilde{c}
\label{comp-pg}\end{equation}
is unbiased to (\ref{pg-thm}) if the following conditions are satisfied:

(i) the gradients for the value function and the policy are compatible,
\begin{equation}\nabla_{\phi}f_{\phi}(\tilde c)= \nabla_{\theta}\,\mbox{log}\,p_{\theta}(\tilde c),
\end{equation}

(ii) the value function parameters $\phi$ reach a local minimum of the mean squared error over the stationary context distribution such that
\begin{equation}\nabla_\phi\,\mathbb{E}_{\tilde c \sim \xi^\pi}\big[p_{\theta}(\tilde c)\big(f_{\phi}(\tilde c) - Q^{\pi}(\tilde c)\big)^{2}\big]=0.
\end{equation}
\end{thm}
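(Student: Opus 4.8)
The plan is to show that the difference between the approximate gradient (\ref{comp-pg}) and the true gradient (\ref{pg-thm}) vanishes under the two stated conditions, by turning the stationarity condition (ii) into an orthogonality statement and then invoking the compatibility condition (i) to rewrite that statement in terms of $\nabla_\theta p_\theta$. This follows the same template as the compatible function approximation arguments of \cite{sutton2000policy,silver2014deterministic}, adapted to the marginal-probability formulation used here.

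First I would expand condition (ii). Since only $f_\phi$ depends on $\phi$ (both $Q^\pi$ and $p_\theta$ are independent of $\phi$, and the density $\xi^\pi$ carries no $\phi$-dependence), pushing $\nabla_\phi$ through the expectation and the square gives
$$\mathbb{E}_{\tilde c \sim \xi^\pi}\big[p_\theta(\tilde c)\big(f_\phi(\tilde c)-Q^\pi(\tilde c)\big)\nabla_\phi f_\phi(\tilde c)\big]=0.$$
This is the key reformulation: at a stationary point the residual $f_\phi-Q^\pi$, weighted by the sampling density $p_\theta$, is orthogonal to the feature map $\nabla_\phi f_\phi$.

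Next I would substitute the compatibility condition (i), $\nabla_\phi f_\phi(\tilde c)=\nabla_\theta\log p_\theta(\tilde c)=\nabla_\theta p_\theta(\tilde c)/p_\theta(\tilde c)$, into the orthogonality relation. The crucial cancellation is that the weighting factor $p_\theta(\tilde c)$ in the MSE objective is exactly matched by the $1/p_\theta(\tilde c)$ coming from the log-derivative, leaving
$$\mathbb{E}_{\tilde c \sim \xi^\pi}\big[\big(f_\phi(\tilde c)-Q^\pi(\tilde c)\big)\nabla_\theta p_\theta(\tilde c)\big]=0,$$
equivalently $\mathbb{E}_{\tilde c\sim\xi^\pi}[\nabla_\theta p_\theta(\tilde c)\,f_\phi(\tilde c)]=\mathbb{E}_{\tilde c\sim\xi^\pi}[\nabla_\theta p_\theta(\tilde c)\,Q^\pi(\tilde c)]$. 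Multiplying both sides by $m$ identifies the left-hand side with (\ref{comp-pg}) and the right-hand side with (\ref{pg-thm}) (recalling that $Q^\pi(\tilde c)$ is the same as the $Q(\tilde c)$ appearing there), which is exactly the claimed unbiasedness.

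The only delicate point I anticipate is justifying the interchange of $\nabla_\phi$ with the integral over $\xi^\pi$, together with the observation that differentiating the objective in (ii) touches $f_\phi$ alone; given the continuity and positivity assumptions already placed on $\mu_\theta$, and hence on $p_\theta$, this is routine and the $1/p_\theta$ factor is well defined. The conceptual heart is simply recognizing that weighting the value-approximation loss by the marginal probability $p_\theta$ is precisely what makes the compatible-features orthogonality condition align with the policy gradient, so that the projection error of $f_\phi$ onto the span of $\nabla_\theta\log p_\theta$ is orthogonal to the gradient direction.
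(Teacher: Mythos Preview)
Your proposal is correct and follows essentially the same route as the paper's own proof: expand the stationarity condition (ii) to obtain the orthogonality relation, then apply the compatibility condition (i) so that the $p_\theta$ weight cancels against the $1/p_\theta$ from $\nabla_\theta\log p_\theta$, leaving precisely the difference between (\ref{comp-pg}) and (\ref{pg-thm}) equal to zero. If anything, your write-up is more explicit than the paper's about the cancellation step and the routine regularity needed to differentiate under the integral.
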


\begin{proof}
By condition (ii), as we assumed the distribution of contexts $\xi^\pi$ is stationary with respect to the policy $\pi$, it is easy to see when the conditions hold,
\begin{equation*}
m\int_{\tilde{c}}\xi^\pi(\tilde c)p_{\theta}(\tilde c)[Q^{\pi}(\tilde{c})-f_{\phi}(\tilde{c})\big]\nabla_{\phi}f_{\phi}(\tilde c)=0.
\end{equation*}
Then by condition (i) we have
\begin{equation*}
m\int_{\tilde{c}} \nabla_{\theta}p_{\theta}(\tilde{c})\big[Q^{\pi}(\tilde{c})-f_{\phi}(\tilde{c})]\xi^\pi(\tilde{c})\mbox{d}\tilde{c}=0,
\end{equation*}
which is the difference between (\ref{pg-thm}) and (\ref{comp-pg}).
\end{proof}

\subsection{The basic PGCR algorithm}
We now formally propose the policy gradients algorithm for general contextual recommendations, coined by PGCR. Recall that our policy returns a Multinoulli random variable that chooses $a_t$ by
\begin{equation*}
a_t\sim \mathrm{Multinoulli}\big\{\sigma\big(\mu_\theta(s_t,c_{t1}), \dots, \mu_\theta(s_t,c_{tm})\big)\big\}.
\end{equation*}
The key is to estimate the marginal expected probabilities for each item. When estimating it for some context, say $c_{ti}$, at some state $s_t$, we resample from all the previously observed contexts at the same state to get another $m-1$ contexts.
\begin{equation}
\hat{p}_\theta(s_t, c_{ti})= \frac{1}{N}\sum_n^N \frac{\mu_\theta(s_t, c_{ti})}{\mu_\theta(s_t, c_{ti})+\sum_c \mu_\theta(s_t, c)}
\end{equation}
where $N$ ($N\geq 1$) is the number of resampling times, and $c$ in the denominator are another $m-1$ sampled contexts from the same state to $s_t$. 

Similar to previous actor-critic algorithms \cite{lillicrap2015continuous}, we can use Sarsa updates \cite{sutton1998reinforcement} to estimate the action-value function and then update the policy parameters respectively by the following \it policy gradients for contextual-bandits \rm algorithm,
\begin{align}
&\delta_t=r_t + \gamma f_{\phi_t}(s_{t+1}, c_{(t+1)a}) - f_{\phi_t}(s_t, c_{ta})\\
&\Delta_{\phi,t}^{\mathrm{PGCR}}= \hat{p}_\theta (s_t, c_{ta})\delta_t \nabla_\phi f_{\phi_t}(s_t, c_{ta})\\
&\phi_{t+1}=\phi_{t} + \alpha_\phi \Delta_{\phi,t}^{\mathrm{PGCR}}\\
&\Delta_{\theta,t}^{\mathrm{PGCR}}=\sum_{i=1}^m \nabla_\theta \hat{p}_{\theta_t} (s_t, c_{ti}) f_{\phi_{t+1}}(s_t, c_{ti})\label{update_PGCR}\\
&\theta_{t+1}=\theta_t + \alpha_\theta \Delta_{\theta,t}^{\mathrm{PGCR}}.
\end{align}
In practice, the gradients can be updated on mini-batches by modern optimizers such as the Adam optimizer \cite{kingma2014adam} which we already used for experiments. PGCR naturally fits to deep Reinforcement Learning and Online Learning, and techniques from these area may also be applied. 
Note that the algorithm can also be apply to the standard contextual bandit setting without states. 
\subsection{Two useful heuristics}
\subsubsection{Time-Dependent Greed} 

Greedy in the Limit with Infinite Exploration (GLIE), is the basic criteria desired for bandit algorithms. GLIE is to explore all the actions infinite times and then to converge to a greedy policy that reaches the global optimal reward if it runs for enough time. Value-based methods can satisfy GLIE if a positive but diminishing \it exploration value \rm is given to all the actions. But for policy-based methods, it is not straightforward, because one cannot explicitly show the exploration level of a stochastic policy. 

For PGCR, on the contrary, it is easy to have GLIE by \it Time-Dependent Greed\rm, which applies a Time-Dependent Greed factor to the scoring function $\mu(c)$. A straightforward usage is to let $\mu_\textrm{greedy}(c;t):=\mu^{\alpha t}(c)$ where $\alpha$ is a pre-determined positive constant value and $t$ is the current time-step. When $t\to\infty$, the policy tends to choose only the item with the largest score. Also the marginal probability $p_\textrm{greedy}(c;t)$ remains positive with the assumption that $\mu(c)\in (0, +\infty)$ for all context $c$, so any item gets an infinite chance to be explored if it runs for enough time. This technique can also apply to other policy-based RL methods as well.

\subsubsection{Actor-Dropout} 

Directed exploration is also desired. UCB and TS methods are well-known to have directed exploration so that they automatically trade-offs between exploration and exploitation and get sub-linear total regrets. The basic insight of UCB and TS is to learn the model uncertainty during the online decision-making process and to explore the items with larger uncertainty (or higher potential to get a large reward). Often a Bayesian framework is used to model the uncertainty by estimating the posterior. However, the limitation of these methods is that assumptions and prior knowledge of reward functions are required, otherwise the posterior cannot be estimated correctly. 

In light of these observations, we propose \textit{Actor-Dropout} for PGCR to achieve directed exploration. The method is simple: to use dropout on the policy network while training and inferring. We do so because it has been theoretically and empirically justified that a neural network with dropout is mathematically equivalent to a Bayesian approximation \cite{gal2016dropout}. So Actor-Dropout naturally learns the uncertainty of policies and does Monte Carlo sampling when making decisions. Since Actor-Dropout needs no prior knowledge, it can apply to more general and complex cases than UCB and TS.

To use Actor-Dropout, in practice it is good enough for exploration to add dropout to just one layer of weights. For example, for a fully-connected actor-network, one can use dropout to the weights before the output layer, with a dropout ratio of 0.5 or 0.67. It can be understood as to train several actors and to randomly pick one at each step, so it trade-offs between exploration and exploitation since each actor learns something different from each other. We also found Actor-Dropout worth trying for other RL or Online Learning tasks in the exploration phase.

\subsection{Lower variance of the gradients of PGCR than vanilla PG}
We prove that the variance of updating the actor and the critic of PGCR is less than that of vanilla PG.

Since the concept of \textit{context} does not exist in the classic formulation of reinforcement learning, it is often regarded as part of the state. Given a stochastic policy $\pi_\theta(s, \mathbf{c})$, PG has policy gradients
\begin{equation}
\nabla_{\theta}J(\pi_{\theta}) = m\sum_s \rho_s^{\pi}\sum_{i=1}^m \nabla_{\theta}[e_i^T\pi_\theta(s, \mathbf{c})]\cdot f_{\phi}(\tilde{c}_i),
\label{grad_pg}\end{equation}
where $e_i$ denotes a unit vector and $e_i^T \pi_\theta(s, \mathbf{c})$ is the probability for choosing the $i^{\mathrm{th}}$ item. For simplicity, we write $ \nu_i:=e_i^T \pi_\theta(s, \mathbf{c})$. Since we focus on policy gradients, we assume that PG has a critic function $f_\phi(\tilde c)$ with the same form as PGCR. The corresponding update steps for PG is
\begin{align}
&\Delta_{\phi_t}^{\mathrm{PG}}= \nu_{ta}\delta_t \nabla_\phi f_\phi(s_t, c_{ta})\\
&\Delta_{\theta_t}^{\mathrm{PG}}= \sum_i^m \nabla_\theta \nu_{ti} f_{\phi_{t+1}}(s_t, c_{ti}).
\label{update_pg}\end{align}

Our PGCR can achieve lower estimation variance comparing to classic stochastic policy gradient methods such as \cite{sutton2000policy}, the reasons are two-fold. Firstly, by Lemma \ref{permutaion_in} we know permutation invariant policies are sufficient for contextual-bandits problems, so PGCR adopts the restricted class of policies. On the contrary, in vanilla policy gradients, one should treat a state $s$ and the whole contexts $\mathbf{c}$ altogether as the input of the policy function, so the sample space can be much larger, which results in lower sample efficiency. Secondly, even if with the same form of policy, vanilla policy gradients tend to converge slower than PGCR because they do not take the marginal expected probabilities of choosing an item into consideration.

In a formal way, we can have the following conclusion.

\begin{lem} Given a policy $\pi_\theta\in\mathcal{N}$ and a value approximation $f_\phi$, both $\Delta_{\phi_{t}}^{PGCR}$ and $\Delta_{\phi_{t}}^{PG}$ are unbiased estimators for the true gradients of action-value approximation
\begin{equation}
\Delta_{\phi_{t}} = p(s_t, c_{ta}) \delta_t \nabla_\phi f_\phi(s_t, c_{ta}).
\end{equation} And $\mbox{Var}\big[\Delta_{\phi_{t}}^{PGCR}\big] \leq \mbox{Var}\big[\Delta_{\phi_{t}}^{PG}\big]$. Additionally if PGCR uses a fixed $N$, as $t\rightarrow +\infty$, with probability $1$ we have
\begin{equation}
\mbox{Var}\big[\Delta_{\phi_{t}}^{PGCR}\big] \rightarrow \frac{1}{N}\mbox{Var}\big[\Delta_{\phi_{t}}^{PG}\big].
\end{equation}
\label{variance}\end{lem}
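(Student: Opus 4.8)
The plan is to condition on the current transition so that the vector-valued comparison collapses to a one-dimensional one. Fix the state $s_t$, the chosen context $c_{ta}$, the TD-error $\delta_t$, and the critic gradient $\nabla_\phi f_\phi(s_t,c_{ta})$. Under this conditioning the three quantities $\Delta_{\phi_t}$, $\Delta_{\phi_t}^{PG}$ and $\Delta_{\phi_t}^{PGCR}$ all share the fixed vector $w := \delta_t\,\nabla_\phi f_\phi(s_t,c_{ta})$ and differ only in the scalar coefficient multiplying it: the target uses $p(s_t,c_{ta})$, PG uses $\nu_{ta}$, and PGCR uses $\hat p_\theta(s_t,c_{ta})$. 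Hence each estimator's covariance equals $ww^\top$ times the variance of its scalar coefficient, and the whole lemma reduces to comparing these three scalars.

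Next I would identify the single random variable underlying both coefficients. Let $X := \mu_\theta(s_t,c_{ta})\big/\big(\mu_\theta(s_t,c_{ta})+\sum_{k=1}^{m-1}\mu_\theta(s_t,C_k)\big)$, where $C_1,\dots,C_{m-1}$ are i.i.d.\ draws from $g^{s_t}$. By the definition of the marginal probability in~(\ref{marginal_pr}) we have $\mathbb{E}[X]=p(s_t,c_{ta})$. The PG coefficient $\nu_{ta}$ is precisely one realization of $X$, with the round's other $m-1$ contexts playing the role of the $C_k$, while the PGCR coefficient $\hat p_\theta=\frac1N\sum_{n=1}^N X_n$ is the empirical mean of $N$ independent resamples. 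Both therefore have mean $p(s_t,c_{ta})$, which gives the unbiasedness claim, and since the $N$ resamples are drawn independently, $\mathrm{Var}[\hat p_\theta]=\frac1N\mathrm{Var}[X]=\frac1N\mathrm{Var}[\nu_{ta}]$. Multiplying back by $ww^\top$ yields $\mathrm{Var}[\Delta_{\phi_t}^{PGCR}]=\frac1N\mathrm{Var}[\Delta_{\phi_t}^{PG}]$, which is $\leq$ because $N\geq 1$ and is exactly the asymptotic factor claimed.

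The main obstacle is justifying the clean i.i.d.\ structure that makes the $X_n$ genuine independent copies of the distribution of $\nu_{ta}$. For PGCR the resampling pool $D_t$ collects \emph{all} observed contexts, chosen and unchosen, which are i.i.d.\ from $g^{s_t}$; by the strong law of large numbers its empirical distribution converges to $g^{s_t}$ almost surely, and this is exactly the source of the ``with probability $1$, as $t\to\infty$'' qualification and of the exact factor $1/N$. At finite $t$ the $N$ resamples share the common empirical pool $D_t$, so a law-of-total-variance decomposition leaves a residual term $\mathrm{Var}\big[\mathbb{E}[X\mid D_t]\big]$ that vanishes only in the limit; I would argue the stated inequality through this decomposition and recover equality asymptotically. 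A secondary subtlety is selection bias: conditioning on which item was selected reweights the law of the remaining contexts, so to read $\nu_{ta}$ as a clean draw of $X$ one must treat $c_{ta}$ as the fixed context for which the gradient is evaluated and the remaining contexts as independent fresh draws from $g^{s_t}$. Making these two points precise is the crux of the argument.
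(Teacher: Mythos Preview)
Your approach is essentially the paper's: condition on the transition so the comparison reduces to the scalar coefficients $\nu_{ta}$ and $\hat p(s_t,c_{ta})$, observe both are unbiased for $p(s_t,c_{ta})$, and then argue that the $N$-sample average has no larger variance, with the exact $1/N$ factor appearing once the resamples become genuinely independent as $t\to\infty$. The only cosmetic difference is that the paper handles the finite-$t$ inequality via a correlation bound (worst case: all $N$ resamples coincide, giving equality) and the asymptotics via a combinatorial no-collision argument, whereas you route the same points through a law-of-total-variance decomposition conditioned on $D_t$ and the strong law of large numbers; the substance is the same.
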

\begin{proof}It is obvious that both $\nu_{ta}$ in $\Delta_{\phi_{t}}^{PG}$ and $\hat{p}(s_t, c_{ta})$ in $\Delta_{\phi_{t}}^{PGCR}$ are unbiased to $p(s_t, c_{ta})$. So both $\Delta_{\phi_{t}}^{PGCR}$ and $\Delta_{\phi_{t}}^{PG}$ are unbiased to $\Delta_{\phi_{t}}$.
To analyze the variance, we focus on the estimations of the probability of choosing an item: $\nu_{ta}$ and $\hat{p}(s_t, c_{ta})$. Let $V:=\mbox{Var}\big[\nu_{ta}\big].$ Then for PGCR,
\begin{equation}
\mbox{Var}\big[\hat{p}(s_t, c_{ta})\big]=\mbox{Var}\bigg[\frac{1}{N}\sum_{n=1}^N \nu^{(n)}_{ta}\bigg],
\end{equation}
where $\nu^{(n)}_{ta}$ denotes the probability of choosing $c_{ta}$ at the $n^\mathrm{th}$ time of sampling. In the worst case, it samples exactly the same set of $m-1$ items every time, then $\mbox{Var}\big[\hat{p}(s_t, c_{ta})\big]=V$. Otherwise if there exists $n_1$ and $n_2$ that the samples are different such that $\nu_{ta}^{(n_1)}\neq \nu_{ta}^{(n_2)}$, then the correlation is strictly less than $1$ and we have $\mbox{Var}\big[\Delta_{\phi_{t}}^{PGCR}\big] < \mbox{Var}\big[\Delta_{\phi_{t}}^{PG}\big]$ in this case. Finally when enough time steps passed, for $N$ is a fixed positive integer, the probability of each item being sampled at most once is
\begin{equation*}
{mt\choose (m-1)N}(mt)^{-(m-1)N} \rightarrow 1\quad \mbox{as}\quad t\rightarrow +\infty.
\end{equation*}
So with probability $1$ the sampled contexts are all different to each other so the estimated probabilities of choosing an item are i.i.d., then
$\mbox{Var}\big[\Delta_{\phi_{t}}^{PGCR}\big] \rightarrow V/N$.
\end{proof}
We get the following theorem applying the similar technique to Lemma \ref{variance}. We claim that Policy gradients (\ref{comp-pg}) has no higher variance than gradients in PG.

\begin{thm}$
\mbox{Var}\big[\Delta_\theta^{\mathrm{PGCR}}\big]\leq \mbox{Var}\big[\Delta_\theta^{\mathrm{PG}}\big]
$.\label{var_pg}\end{thm}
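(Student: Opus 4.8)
The plan is to replay the argument of Lemma~\ref{variance}, now for the actor update $\Delta_\theta$ instead of the critic update $\Delta_\phi$. First I would note that the two estimators in~(\ref{update_PGCR}) and~(\ref{update_pg}) are assembled from the \emph{same} observed targets $c_{t1},\dots,c_{tm}$ and the \emph{same} critic values $f_{\phi_{t+1}}(s_t,c_{ti})$; they differ only in how each per-item marginal probability is estimated. For item $i$, the PG update uses the single-sample quantity $\nabla_\theta\nu_{ti}=\nabla_\theta\frac{\mu_\theta(s_t,c_{ti})}{\sum_j\mu_\theta(s_t,c_{tj})}$ formed from the one realized companion set, whereas PGCR uses $\nabla_\theta\hat p_\theta(s_t,c_{ti})=\frac1N\sum_{n=1}^N\nabla_\theta\frac{\mu_\theta(s_t,c_{ti})}{\mu_\theta(s_t,c_{ti})+\sum_c\mu_\theta(s_t,c)}$, an average over $N$ independently resampled companion sets. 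Since the contexts are i.i.d.\ from $g^{s_t}$, each resampled summand is equal in distribution to the PG single sample, so by interchanging gradient and expectation both $\Delta_\theta^{\mathrm{PGCR}}$ and $\Delta_\theta^{\mathrm{PG}}$ are unbiased for the compatible gradient~(\ref{comp-pg}) of Theorem~\ref{Compatible}; this is the exact analogue of the unbiasedness step in Lemma~\ref{variance}.

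Next I would reduce the variance comparison to the per-item gradient estimates, just as Lemma~\ref{variance} reduces to $\nu_{ta}$ versus $\hat p(s_t,c_{ta})$. Writing $h(c,S):=\nabla_\theta\frac{\mu_\theta(s_t,c)}{\mu_\theta(s_t,c)+\sum_{c'\in S}\mu_\theta(s_t,c')}$ for the single-companion-set gradient, the PGCR per-item estimate is $\frac1N\sum_n h(c_{ti},S_n^{i})$, an average of $N$ copies that are conditionally i.i.d.\ given the target $c_{ti}$ and each equidistributed with the PG estimate $h(c_{ti},C_{-i})$. The scalar averaging argument of Lemma~\ref{variance} then yields $\mbox{Var}\big[\nabla_\theta\hat p_\theta(s_t,c_{ti})\big]\le\mbox{Var}\big[\nabla_\theta\nu_{ti}\big]$, with equality only in the degenerate case of identical resamples and with the ratio tending to $1/N$ once enough distinct contexts have accumulated, again mirroring Lemma~\ref{variance}.

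The step I expect to be the main obstacle is passing from this per-item bound to the full vector update $\Delta_\theta=\sum_{i=1}^m(\cdot)\,f_{\phi_{t+1}}(s_t,c_{ti})$, since $\mbox{Var}$ of the sum (interpreted coordinate-wise, or as the trace of the covariance) carries cross-covariances between items, and here the two schemes differ structurally: PG reuses one shared companion set for every $i$, correlating the terms and in fact forcing $\sum_i\nabla_\theta\nu_{ti}=\nabla_\theta\sum_i\nu_{ti}=0$, whereas PGCR draws companions independently per item. I would attack this with the law of total variance, conditioning on the observed targets $\mathbf{c_t}$: then $\Delta_\theta^{\mathrm{PG}}$ is deterministic so its variance equals $\mbox{Var}_{\mathbf{c_t}}$ of a single-companion-sample gradient, while $\Delta_\theta^{\mathrm{PGCR}}$ splits into a resampling term $\mathbb{E}_{\mathbf{c_t}}\mbox{Var}_{\mathrm{rs}}[\,\cdot\mid\mathbf{c_t}]$ plus $\mbox{Var}_{\mathbf{c_t}}$ of the clean gradient built from the true marginals $p_\theta$. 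The genuinely delicate point is that this decomposition does not order the two for free: one must show the extra resampling variance PGCR incurs is absorbed by the companion-sampling fluctuation PG already pays, and must control a cross-covariance term that need not be sign-definite. The cleanest route I see is to argue coordinate-by-coordinate, where each coordinate collapses to the scalar inequality of Lemma~\ref{variance} and the cross-item contributions are governed by the conditional independence of the PGCR resamples against the shared dependence in PG; I expect the paper's ``similar technique'' to treat the $m$ item contributions essentially in isolation, so that summing the per-item inequalities delivers $\mbox{Var}\big[\Delta_\theta^{\mathrm{PGCR}}\big]\le\mbox{Var}\big[\Delta_\theta^{\mathrm{PG}}\big]$, with the $1/N$ refinement carrying over in the limit.
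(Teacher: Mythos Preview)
Your plan decomposes item-by-item first and then averages over resamples, which is precisely what manufactures the cross-covariance difficulty you flag as the ``main obstacle''---and you do not resolve it, only express the hope that per-item inequalities can be summed. The paper sidesteps this entirely by reversing the order of aggregation: it writes
\[
\Delta_\theta^{\mathrm{PGCR}}=\sum_{i=1}^m \nabla_\theta \hat p_\theta(s_t,c_{ti})\,f_{\phi_{t+1}}(s_t,c_{ti})=\frac{1}{N}\sum_{n=1}^N\underbrace{\sum_{i=1}^m \nabla_\theta \nu_{ti}^{(n)}\,f_{\phi_{t+1}}(s_t,c_{ti})}_{=:Y_n},
\]
i.e.\ it pushes the $i$-sum \emph{inside} the $n$-average. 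Each $Y_n$ is a complete actor update built from the $n$-th resampled companion draw, and because all contexts (observed and resampled) are i.i.d., the paper asserts $\mbox{Var}[Y_n]=V_\theta:=\mbox{Var}\big[\Delta_\theta^{\mathrm{PG}}\big]$ for every $n$. The conclusion then follows from the elementary bound $\mbox{Var}\big[\tfrac1N\sum_n Y_n\big]\le\mbox{Var}[Y_1]$ for identically distributed summands. No per-item variance inequality is ever invoked; the cross-item correlations that worried you are packaged wholesale inside each $Y_n$ rather than controlled term by term.

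So the missing idea is exactly this: aggregate over the $m$ items \emph{before} averaging over the $N$ resamples, so that the comparison reduces to ``average of $N$ copies versus one copy'' of the same random vector. Your guess that the paper ``treats the $m$ item contributions essentially in isolation'' is the opposite of what it actually does, and your law-of-total-variance route---as you yourself concede---does not close on its own, since that decomposition does not order the two estimators without an additional argument about the cross terms.
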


\begin{proof}
Similar to the proof of Lemma \ref{variance}, we denote the variance of $\Delta_\theta^{\mathrm{PG}}$ by $V_\theta$.
\begin{equation}
V_\theta := \mbox{Var}\big[\Delta_\theta^{\mathrm{PG}}\big]=\mbox{Var}\bigg[\sum_i^m \nabla_\theta \nu_{ti} f_{\phi_{t+1}}(s_t, c_{ti})\bigg]
\end{equation}
By the update rules (\ref{update_PGCR}) of PGCR,
the variance of $\Delta_\theta^{\mathrm{PGCR}}$ is
\begin{equation*}\begin{split}
\mbox{Var}\big[\Delta_{\theta_t}^{\mathrm{PGCR}} \big]
&= \mbox{Var}\bigg[\sum_{i=1}^m \nabla_\theta \hat{p}_\theta (s_t, c_{ti}) f_{\phi_{t+1}}(s_t, c_{ti})\bigg]\\
&= \mbox{Var}\bigg[\frac{1}{N}  \sum_{n=1}^N\sum_{i=1}^m \nabla_\theta \nu^{(n)}_{ti} f_{\phi_{t+1}}(s_t, c_{ti})\bigg]\\
&\leq \mbox{Var}\bigg[ \sum_{i=1}^m \nabla_\theta \nu^{(n)}_{ti} f_{\phi_{t+1}}(s_t, c_{ti})\bigg],\quad \forall\,n=1,\dots,N.
\end{split}\end{equation*}
Because of the assumption that the sampled contexts in each sampling procedure are independent and identical distributed, we have $\mbox{Var}\big[ \sum_{i=1}^m \nabla_\theta \nu^{(n)}_{ti} f_{\phi_{t+1}}(s_t, c_{ti})\big] = V_\theta$ for all $n=1,\dots,N$ and the theorem is proved.
\end{proof}
Note that, in practice, PGCR does not necessarily set $N$ to a large integer since it is naturally a finer-grained experience replay \cite{adam2012experience}. Surprisingly, when $N=1$, PGCR can have a better performance than $PG$ even in the simplest setting. In the next section, we will demonstrate experimental results that show that PGCR with $N=1$ achieves better performance in various settings compared to other baseline methods including PG.

The results can be interpreted as follows. From a statistical point of view, PGCR takes advantage from a resampling technique so the estimations have lower variances. From an optimization perspective, PGCR reduces the correlation of estimating probabilities of choosing the $m$ items within the same time step, so it has less chance to suffer from exploiting and over-fitting, while vanilla PG cannot. For example, when the estimated values of $m$ contexts are given, an optimizer for PG would simultaneously increase one item's chosen probability and reduce other $m-1$ ones', which results in training the policy into a deterministic one: the item with the largest estimated value will get a chosen probability close to $1$, and others get arbitrary small probabilities close to $0$. Afterward, the items with $0$ chosen probabilities will hardly have any influence to further updates. So eventually, vanilla PG is likely to over-fit the existing data. On the contrary, when PGCR estimates the gradients, even if an item cannot beat against the other $m-1$ competitors at its own time step, it can still help because it outranked some items from other time steps. Therefore, PGCR tends to be more robust and explores better than vanilla PG.

\begin{figure*}[!h]
    \centering
    \begin{minipage}{.5\textwidth}
        \centering
        \includegraphics[width=\linewidth]{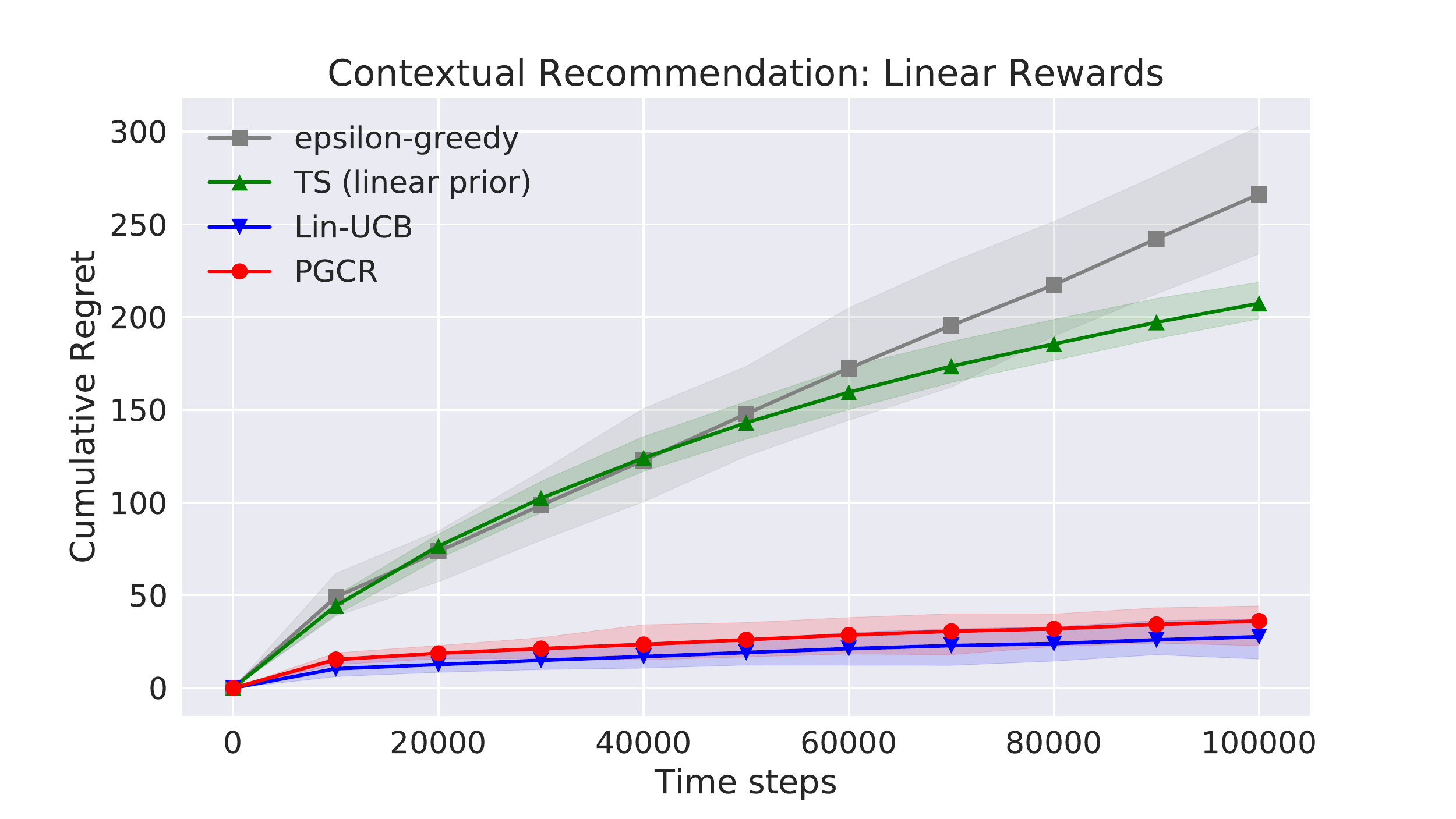}
        (a)
    \end{minipage}
    \begin{minipage}{0.5\textwidth}
        \centering
        \includegraphics[width=\linewidth]{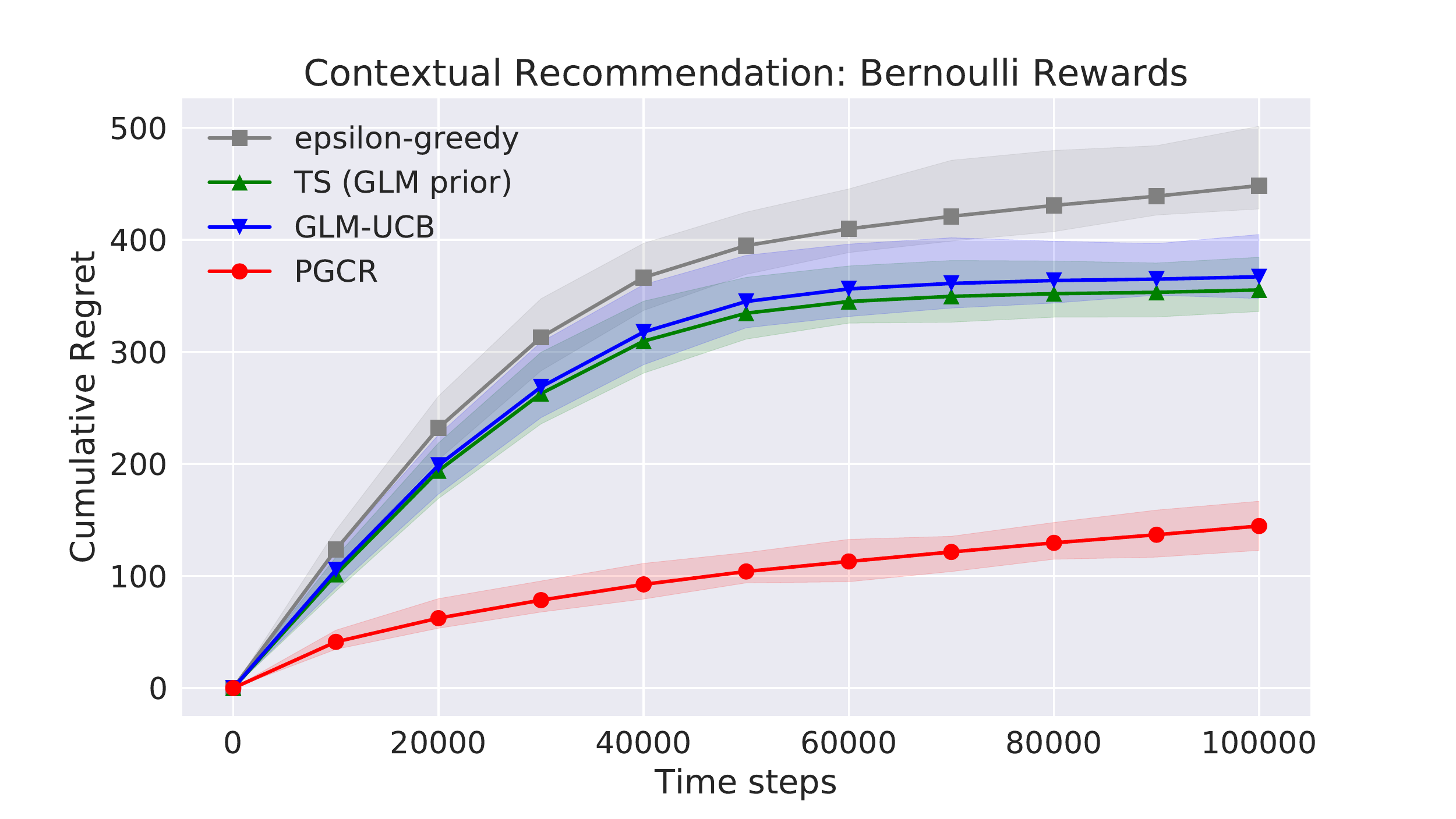}
        (b)
    \end{minipage}\\
    
    \begin{minipage}{0.5\textwidth}
        \centering
        \includegraphics[width=\linewidth]{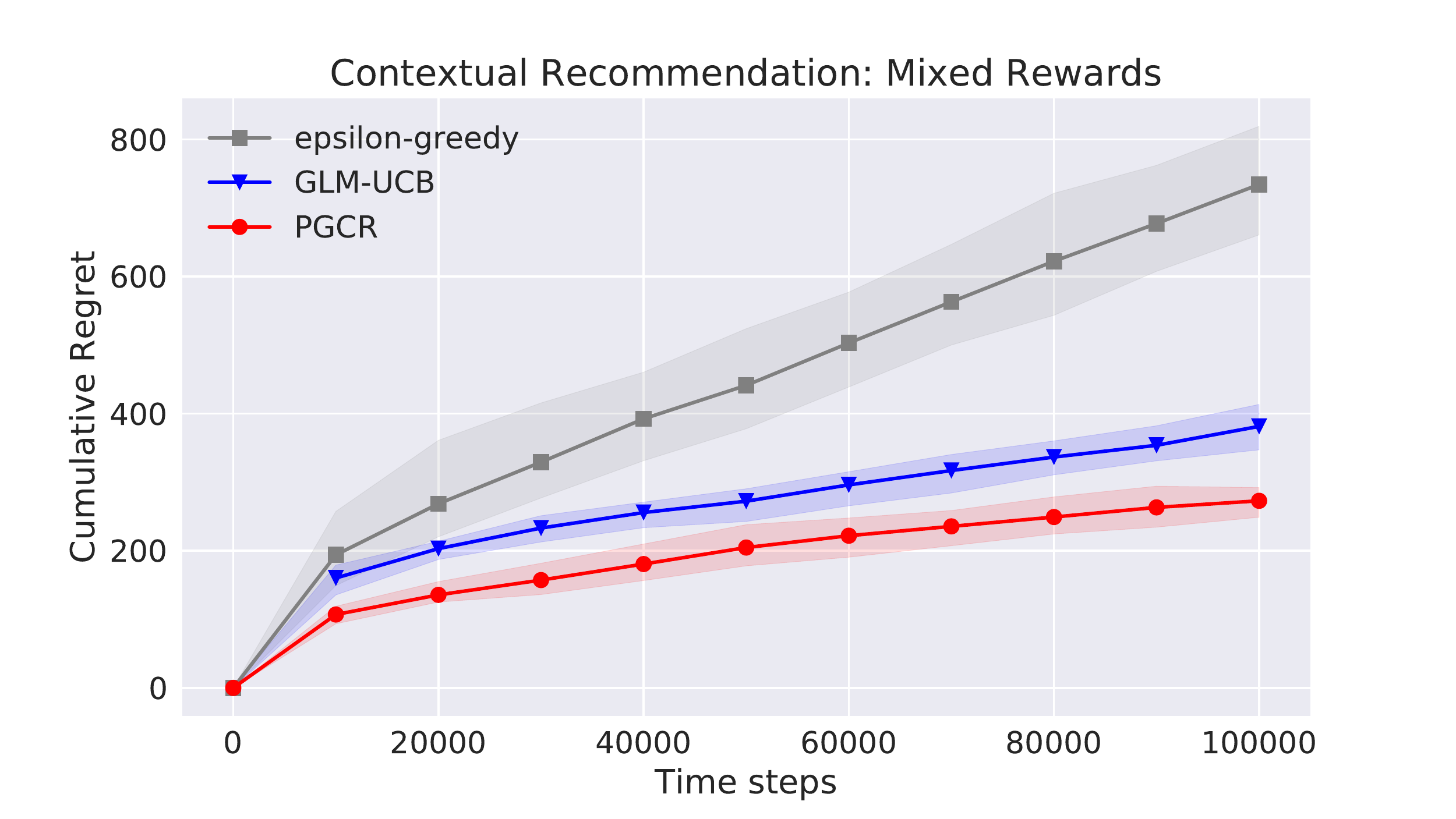}
        (c)
    \end{minipage}
    \begin{minipage}{0.5\textwidth}
        \centering
        \includegraphics[width=\linewidth]{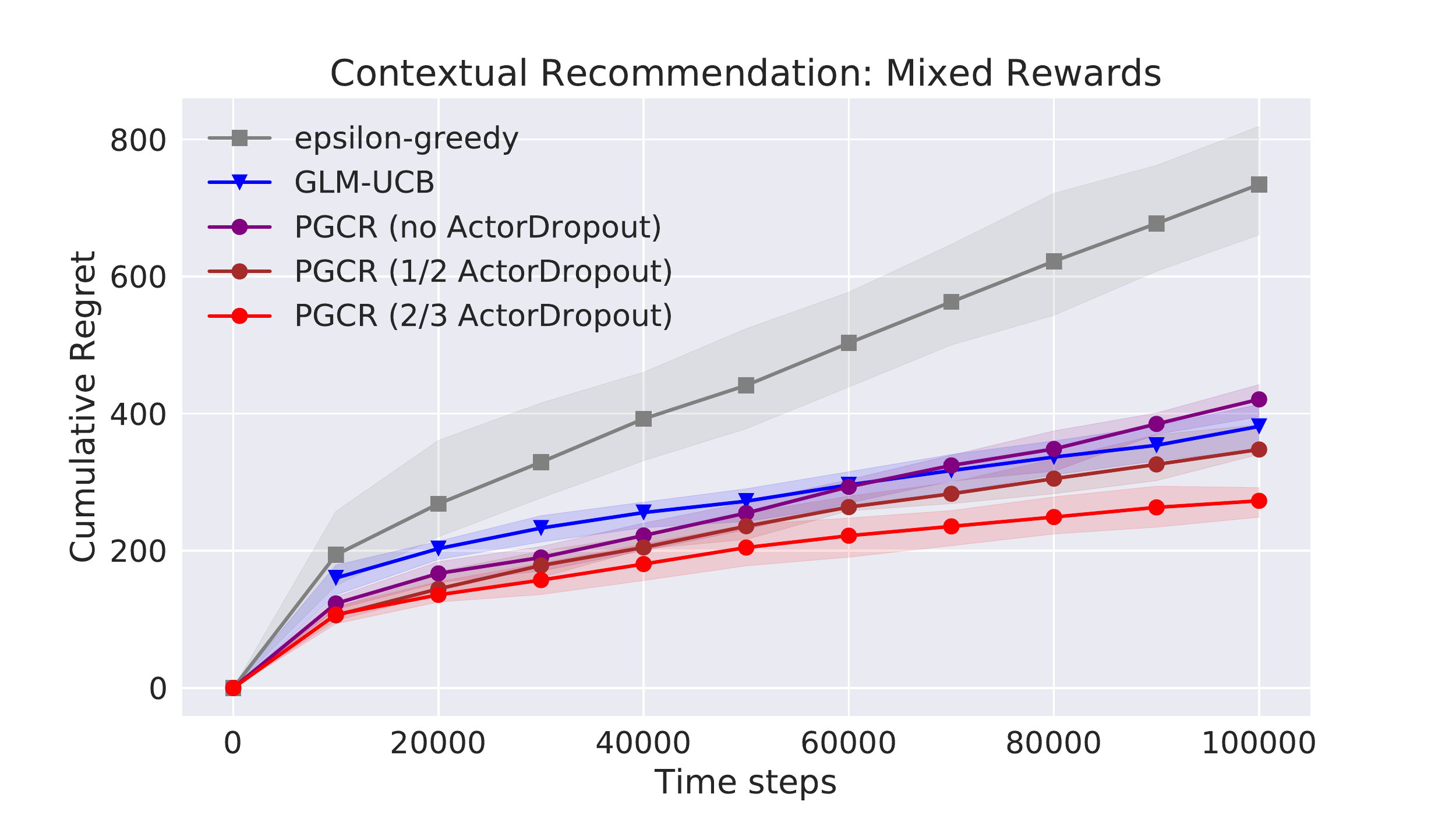}
        (d)
    \end{minipage}
    \caption{Experiments on toy data. The solid lines are averaged cumulative regrets over 20 runs, and the shaded areas stand for the standard deviations. (a) Linear rewards: PGCR perform comparably to Lin-UCB and is much better than the other two. (b) Bernoulli rewards: PGCR outperforms the others with large margins. (c) Mixed-rewards: PGCR outperforms the others. (d) Testing PGCR with different level of Actor-Dropout. It shows that, without Actor-Dropout, PGCR tends to have linear regret like $\epsilon$-greedy and fail to converge. But when using Actor-Dropout, PGCR empirically converges which is similar to UCB, yet gets even lower regret than UCB.}\label{toy}
\end{figure*}
\section{Experiments}
\subsection{Datasets and simulation details}
We test our proposed PGCR and other baseline methods on several simulated contextual recommendation environments. To start with, we would like to describe the details of our simulations.

\begin{itemize}
\item \textbf{Toy environments:} The first set of experiments are done with a generated toy dataset to simulate the standard contextual bandit settings. We simulate a contextual bandit environment with $5$ items at each step, where each item is represented by a $40$-dimensional context vector that is i.i.d. sampled from a uniform distribution in a unit cube $c\sim \mathrm{U}(\mathcal{C})$, $\mathcal{C}={(0, 1)}^{40}$. Once an item with context $c$ is chosen by the player, the environment returns a random reward $R(c)$. We test three types of reward functions: (a) the linear reward with Gaussian noise, as $R(c):=w_{r}^{T}c+e_{r}$; (b) the Bernoulli reward, as $R(c)\sim \textrm{Bernoulli}(\beta(c))$ where $\beta(c):=w_{\beta}^{T}c+e_{\beta}\in[0,1]$ is the probability to return reward $1$ for choosing $c$; (c) the mixed reward, which first returns a random linear reward $w_{r}^{T}c+e_{r}$ with probability $\beta(c)$ and returns a zero reward with probability $1-\beta(c)$, as a mixture of binary and linear rewards. $w_{r}$ and $w_{\beta}$ are coefficients unknown to the player. $e_{r}$ and $e_{\beta}$ are white noises to introduce some randomness. 

\item \textbf{Music recommendation environments:} We use a real-world dataset of music recommendation provided by \it KKBox \rm and open-sourced on \it Kaggle.com\rm\footnote{https://www.kaggle.com/c/kkbox-music-recommendation-challenge/}. The challenge is to predict the chances of a user listening to a song repeatedly. So we construct simulation environments to simulate the online contextual recommendations to test our methods.

We construct two simulators based on the distributions of the dataset with different settings: one simplified setting without explicit states and one general setting with states and state transitions. At each time step, a user comes to the system, who is randomly picked from the users in the dataset. For the state-aware setting, we set the last 3 songs the system recommended previously to this user together with the corresponding feedbacks (listened or not) as his/her current state. Then the simulator randomly samples a set of $10$ candidate songs from the user's listening history. Finally, the recommender needs to make a decision to recommends one to the user. If the user listens to it again (which is the original target for the supervised learning dataset), the system gets a reward $1$, otherwise, it gets a reward $0$. Each song has 94 fields of features in the context vector, including discrete attributes and numerical features about the song's genre, artists, composers, language, etc. The simulation consists of 5 million time steps and is repeated for 5 runs. Since the optimal policy is unknown in this problem, we will use the average reward as the performance metric.

\end{itemize}
\subsection{Experiments on toy datasets}
For the three standard contextual-bandit problems, we use the cumulative regret as the evaluation metric. The cumulative regret is defined as the cumulative difference between the received reward and the reward of the optimal item.

For PGCR, we use the multi-layer perceptron (MLP) as the policy and value networks. The MLPs have one fully-connected hidden layer with $10$ units and the ReLU activation. As for the training details, at each step, we sample a batch of 64 history samples for efficient experience replay. The loss is minimized by the gradient-based Adam optimizer \cite{kingma2014adam}. 

We compare PGCR with the following algorithms: 
\begin{itemize}
\item \textbf{$\epsilon$-greedy}: It chooses the item with the largest estimated value with a probability of $1-\epsilon$ and chooses randomly otherwise. Specifically, it estimates the expected reward by a value network with the same structure of PGCR. 
\item \textbf{Lin-UCB}: The widely studied version of UCB for contextual bandits with linear rewards \cite{li2010contextual,chu2011contextual,abbasi2011improved}, which uses a linear function to approximate the reward, and chooses the item with the maximum sum of the estimated reward and the estimated confidence bound. 
\item \textbf{GLM-UCB}: The UCB method for generalized linear rewards, proposed in \cite{filippi2010parametric}, which can solve non-linear rewards if the reward function can be fitted by a generalized linear model of contexts, such as Bernoulli rewards and Poisson rewards. 
\item \textbf{Thompson Sampling}: It samples from the posterior distribution of parameters, estimates each item's value, and chooses the item with the maximum estimation \cite{chapelle2011empirical,agrawal2013thompson}. Specifically, it uses the same function approximation as Lin-UCB or GLM-UCB for linear and non-linear rewards. 
\end{itemize}
For Lin-UCB, GLM-UCB and Thompson Sampling, we use the same training procedures as suggested in \cite{li2010contextual}. For $\epsilon$-greedy, $\epsilon$ is set constantly to $0.1$. 

For each method, in order to reduce the randomness of experiments, we run the simulation for 20 times and report their averaged cumulative regrets. The results are shown in Figure \ref{toy}. 

From Figure \ref{toy}(a), (b) and (c), it sees that PGCR converges more quickly and has lower regrets in most cases. For the linear bandits, Lin-UCB is, theoretically, one of the best choice if the linearity is known. Even though PGCR does not know the form of reward function apriori, the performance is comparable with Lin-UCB. Thompson Sampling converges slower and $\epsilon$-greedy obviously fails to converge. For the generalized linear case, PGCR achieves much lower regret than other baselines. For the mixed reward case, we see that PGCR learns faster than GLM-UCB and empirically converges after a long run.

We further did an ablation study to test the improvements induced by the proposed heuristic, Actor-Dropout. See Figure \ref{toy}(d) for the comparisons. 

It shows that Actor-Dropout significantly helps PGCR to converge. The growth rate of cumulative regret for PGCR without Actor-Dropout is similar to $\epsilon$-greedy, indicating that the original algorithm fails to converge and has linear regrets. But when equipped with Actor-Dropout, the regrets are smaller. When the dropout rate is set to $0.67$, the growth rate of PGCR's regret is similar to that of GLM-UCB which can theoretically achieve sub-linear regrets. So empirically we remark that Actor-Dropout is a strong weapon for PGCR in order to get a convergence guarantee, even with almost no assumptions on the problem.
\subsection{Experiments on music recommendations}
\subsubsection{Simplified setting without explicit states}

The experimental setup in the setting without states is as follows: PGCR uses MLPs as policy and value networks. Each MLP has two hidden layers of sizes 60 and 20 respectively and uses the ReLU nonlinearity. $\epsilon$-greedy has exactly the same value network like the one in PGCR. All these methods are trained with Adam algorithm with the same learning rates. The batch-size is set as 256. We also test GLM-UCB here as another baseline because the reward is binary thus can be learned by a logistic regression model.

 As is shown in Figure \ref{KKBox}(a), PGCR performs the best. It is interesting to see that traditional contextual-bandits methods learn fast from the beginning, which indicates that they are good at exploration, but their average rewards stop increasing rapidly due to the limitation of the fitting power of general linear models. $\epsilon$-greedy learns slowly at the early stage, but it outperforms GLM-UCB and TS after a long run. Comparing with these algorithms, PGCR has the best performance from the beginning to the end of the learning process.
\begin{figure}[!htb]
    \centering
    \begin{minipage}{.45\textwidth}
        \centering
        \includegraphics[width=\linewidth]{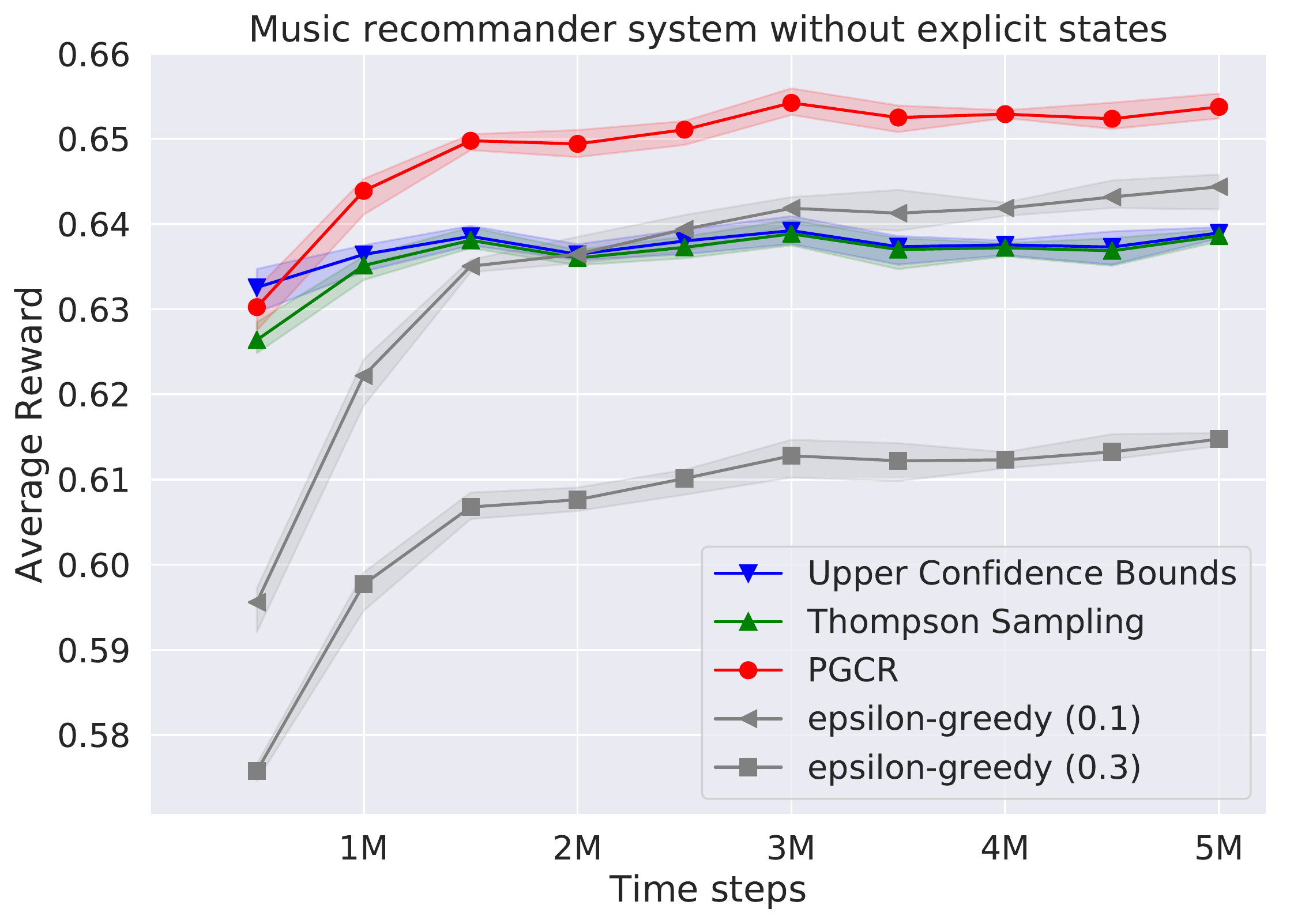}
    \end{minipage}\\
    (a)\\
    \begin{minipage}{.45\textwidth}
        \centering
        \includegraphics[width=\linewidth]{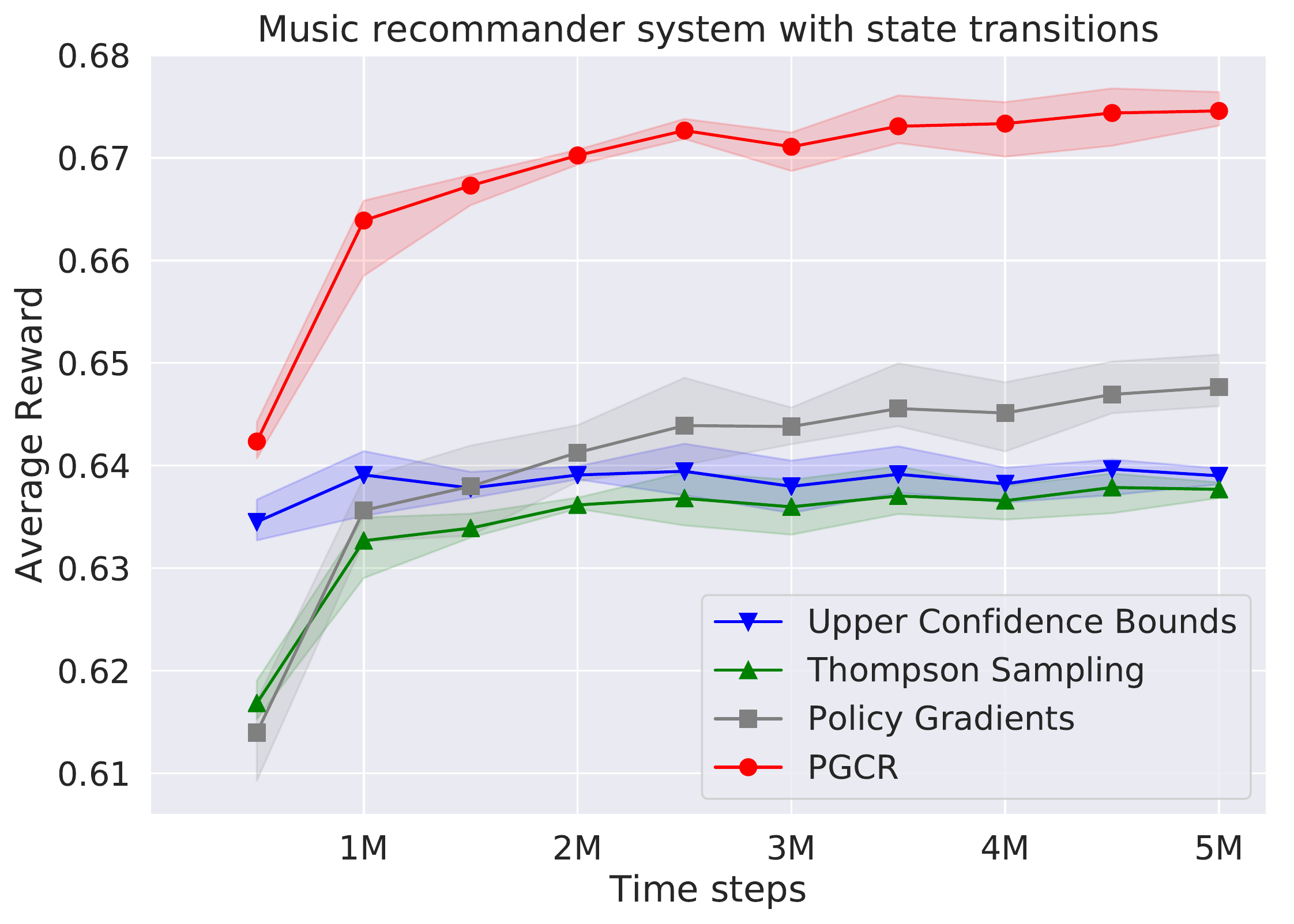}
    \end{minipage}\\
    (b)\\
    \caption{Average rewards of episodes for (a) music recommender without explicit states; (b) music recommender with states and state transitions. The solid lines are averaged from 5 runs, and the shaded areas are standard deviations.}\label{KKBox}
\end{figure}

\subsubsection{Simulation with states and transitions} 

The experimental setup for this experiment with states is as follows. We enlarge the size of the first hidden layer of the MLPs from 60 to 90 because there are more inputs (including the contexts and the states). UCB and TS here take the augmented contexts as input. They keep the same general linear modeled priors as the previous part. 

The result of the experiment is shown in figure \ref{KKBox}(b). PGCR outperforms other algorithms with larger map comparing with the previous experiment. An interesting fact is that both UCB and TS can only get almost the same return as in the previous experiment, which indicates that they can hardly make any use of the information from the states. PGCR learns faster and gets state-of-the-art performance in this task.

Since there are states and transitions, we also test the vanilla Policy Gradient method \cite{sutton2000policy} as another baseline, with the same MLP neural networks as actor and critic as PGCR. From the result, it shows that vanilla PG can outperform classic bandits methods, which is not surprising because it can make use of the state dynamics and maximize the long-term return.
However, it is still much worse than our purposed PGCR, for the reason that PGCR has a smaller search space and smaller variance when estimating the policy gradients. So the experimental results verified that PGCR is more sample efficient than the vanilla policy gradient method.

These simulation experiment results indicate that PGCR provides an alternative to conventional methods by using stochastic policies which can address the trade-off between exploration and exploitation well. The fast learning in the beginning phase and the stable performance over the entire training period of PGCR show that it is reliable to apply for real-world recommender systems.

\section{Conclusion and Discussion}
This paper has studied how to use the actor-critic algorithm with neural networks for general contextual recommendations, without unrealistic assumptions or prior knowledge to the problem. We first show that the class of permutation invariant policies is sufficient for our problem, and then derive the expected return of a policy depends on its marginal expected probability of choosing each item. We next propose a restricted class of policies in which the objective has a simple closed form and is differentiable to parameters. We prove that when using policies in this class, the gradient can be computed in closed-form. Furthermore, we propose Time-Dependent Greed and Actor-Dropout to significantly improve the performance and to guarantee the convergence property. Eventually, it comes to our proposed PGCR algorithm. The algorithm can be applied to standard contextual bandits as well as the generalized sequential decision-making problems with state and state transitions.

By testing on a toy dataset and a recommendation dataset, we showed that PGCR indeed achieves state-of-the-art performance for both classic one-step recommendations and MDP-CR with state transitions in a real-world scenario. 

It is a promising direction for the future work to extend our results to a variant of realistic recommendation settings, i.e, online advertising systems that choose multiple items at each step, or learning-to-rank that carries out diverse recommendation. 
\section*{Acknowledgement}
The work was supported by the National Key Research and Development Program of China under Grant No. 2018YFB1004300, the National Natural Science Foundation of China under Grant No. U1836206, U1811461, 61773361, the Project of Youth Innovation Promotion Association CAS under Grant No. 2017146.

Pingzhong Tang and Qingpeng Cai were supported in part by the National Natural Science Foundation of China Grant 61561146398, a China Youth 1000-talent program and an Alibaba Innovative Research program.
\bibliographystyle{ACM-Reference-Format}
\bibliography{bib}

\end{document}